\newtheorem{assumption}{Assumption}
\newtheorem{lemma}{Lemma}
\newtheorem{remark}{Remark}
\newtheorem{corollary}{Corollary}
\newtheorem{theorem}{Theorem}
\let\color@begingroup\relax
   \let\color@endgroup\relax}{}%
\def\fix@ieeecolor@hbox#1{%
  \hbox{\color@begingroup#1\color@endgroup}}
\patchcmd\@makecaption{\hbox}{\fix@ieeecolor@hbox}{}{\FAILED}
\patchcmd\@makecaption{\hbox}{\fix@ieeecolor@hbox}{}{\FAILED}
\algrenewcommand\algorithmicrequire{\textbf{Input:}}
\algrenewcommand\algorithmicensure{\textbf{Output:}}
\newcommand{\CommentState}[1]{\Statex\hspace{\algorithmicindent}{\color{blue}// #1}}
\newcommand{\algsgd}{LT-ADMM\xspace}
\newcommand{\algsaga}{LT-ADMM-VR\xspace}
\newcommand{\algsagatwo}{LT-ADMM-VR v2\xspace}
\DeclareMathOperator*{\argmin}{arg\,min}
\newcommand{\N}{\mathbb{N}}
\newcommand{\R}{\mathbb{R}}
\newcommand{\norm}[1]{\left\lVert#1\right\rVert}
\newcommand{\cmark}{{\color{green!80!black}\ding{51}\xspace}}
\newcommand{\xmark}{{\color{red}\ding{55}\xspace}}
\newcommand{\sgd}[1]{\colorbox{green!30!white}{#1}}
\newcommand{\saga}[1]{\colorbox{blue!30!white}{#1}}
\newcommand{\tgrad}{t_\mathrm{G}\xspace}
\newcommand{\tcomm}{t_\mathrm{C}\xspace}
\title{Communication-Efficient\\Stochastic Distributed Learning}
\author{%
	Xiaoxing~Ren$^{1}$, Nicola~Bastianello$^{2\star}$, Karl~H.~Johansson$^{2}$, Thomas~Parisini$^{3,4,5}$
	\thanks{The work of X.R. and T.P. was supported by European Union's Horizon 2020 Research and Innovation programme under grant agreement no. 739551 (KIOS CoE).}
\thanks{The work of N.B. and K.H.J. was partially supported by the European Union’s Horizon Research and Innovation Actions programme under grant agreement No. 101070162, and partially by Swedish Research Council Distinguished Professor Grant 2017-01078 Knut and Alice Wallenberg Foundation Wallenberg Scholar Grant.}
\thanks{$^{1}$School of Civil and Environmental
Engineering, Systems Engineering Field, Cornell University, Ithaca, NY,
USA. }
	\thanks{$^{2}$School of Electrical Engineering and Computer Science, and Digital Futures, KTH Royal Institute of Technology, Stockholm, Sweden.}
 	\thanks{$^{3}$Department of Electrical and Electronic Engineering, Imperial College London, London, United Kingdom.}%
	\thanks{$^{4}$Department of Electronic Systems, Aalborg University, Denmark.}%
	\thanks{$^{5}$Department of Engineering and Architecture, University of Trieste, Trieste, Italy.}%
	\thanks{$^{\star}$Corresponding author. Email: {\tt\footnotesize nicolba@kth.se}.}%
}
\begin{document}

\maketitle

\begin{abstract}
We address distributed learning problems, both nonconvex and convex, over undirected networks. In particular, we design a novel algorithm based on the distributed \textit{Alternating Direction Method of Multipliers} (ADMM) to address the challenges of high communication costs, and large datasets.
Our design tackles these challenges i) by enabling the agents to perform multiple local training steps between each round of communications; and ii) by allowing the agents to employ stochastic gradients while carrying out local computations.
We show that the proposed algorithm converges to a neighborhood of a stationary point, for nonconvex problems, and of an optimal point, for convex problems.
We also propose a variant of the algorithm to incorporate variance reduction thus achieving exact convergence. We show that the resulting algorithm indeed converges to a stationary (or optimal) point, and moreover that local training accelerates convergence.
We thoroughly compare the proposed algorithms with the state of the art, both theoretically and through numerical results.
\end{abstract}

\begin{IEEEkeywords}
Distributed learning; Stochastic optimization; Variance reduction; Local training.
\end{IEEEkeywords}

\section{Introduction}
Recent technological advances have enabled the widespread adoption of devices with computational and communication capabilities in many fields, for instance, power grids \cite{molzahn_survey_2017}, robotics \cite{shorinwa2024distributed_tutorial,shorinwa2024distributed}, transportation networks \cite{mohebifard2018distributed}, and sensor networks~\cite{nedic_distributed_2018}.
These devices connect with each other, forming multi-agent systems that cooperate to collect and process data~\cite{gafni_federated_2022}. As a result, there is a growing need for algorithms that enable efficient and accurate cooperative learning.

In specific terms, the objective in distributed learning is to train a model (\textit{e.g.}, a neural network) with parameters $x \in \R^n$ {\it cooperatively} across a network of $N$ agents. {Each agent $i$}  has access to a local dataset which defines the local cost as
\begin{equation}\label{eq:erm-cost}
    f_{i}({x}) = \frac{1}{m_i} \sum_{h=1}^{m_i} f_{i,h}(x) \, ,
\end{equation}
with $f_{i,h}: \mathbb{R}^{n} \rightarrow \mathbb{R}$ being the loss function associated to data point $h \in \{ 1, \ldots, m_i \}$.
Thus, the goal is for the agents to solve the following constrained problem \cite{bottou2018optimization,alghunaim2022unified}:
\begin{equation}\label{eq:optimization-problem}
     {
     \min_{\substack{x_i \in\mathbb{R}^{n}\\i ={1,...N}}} \ \ \frac{1}{N} \sum_{i=1}^{N}f_{i}({x}_i) \quad \text{s.t.} \ \ x_1 = x_2 = \cdots = x_N \, ,
     }
\end{equation}
where the objective is the sum of local costs~\eqref{eq:erm-cost} to pool together the agents' data. Moreover, each agent is assigned a set $x_i$ of local model parameters, and the consensus constraints $x_1 = x_2 = \ldots = x_N$ ensure that the agents will asymptotically agree on a shared trained model.

\begin{table*}[!t]
\centering
\caption{Comparison with the state of the art in stochastic distributed optimization.}
\label{tab:comparison}
    \begin{tabular}{cccccccc}
    \hline
    Algorithm [Ref.] & \thead{variance\\reduction} & \thead{grad. steps\\$\div$ comm.} & \thead{\# stored \\ variables$^\dagger$} & \thead{comm.\\size$^\ddagger$} & \thead{\# $\nabla f_{i,j}$ evaluations\\per iteration} & assumpt.$^\star$ & \thead{convergence} \\
    \hline
    
    K-GT \cite{liu_decentralized_2023} & \xmark & $\tau \div 1$ & $2$ & $2 |\mathcal{N}_i|$ & $1$ & n.c. & sub-linear, $\propto \sigma^2$ \\

    LED \cite{alghunaim_local_2023} & \xmark & $\tau \div 1$ & $2$ & $|\mathcal{N}_i|$ & $1$ & \thead{n.c.\\s.c.} & \thead{sub-linear, $\propto \sigma^2$\\linear, $\propto \sigma^2$} \\

    RandCom \cite{guo_randcom_2023} & \xmark & \thead{$\left\lceil \frac{1}{p} \right\rceil \div 1$\\(in mean)} & $2$ & $|\mathcal{N}_i|$ & $1$ & \thead{n.c.\\s.c.} & \thead{sub-linear, $\propto \sigma^2$\\linear, $\propto \sigma^2$} \\

    \thead{VR-EXTRA/DIGing \cite{li_variance_2022},\\GT-VR \cite{jiang_distributed_2023}} & \cmark & $1 \div 1$ & $3$ & $2 |\mathcal{N}_i|$ & $|\mathcal{B}|$, $m_i$ every $\left\lceil \frac{1}{p} \right\rceil$ & \thead{n.c.\\s.c.} & \thead{sub-linear, $\to 0$\\linear, $\to 0$} \\

    GT-SAGA \cite{xin_variance-reduced_2020,xin_fast_2022b} & \cmark & $1 \div 1$ & $3$ & $2 |\mathcal{N}_i|$ & $1$ & \thead{s.c.\\n.c.} & \thead{sub-linear, $\to 0$\\linear, $\to 0$} \\

    GT-SARAH \cite{xin_fast_2022} & \cmark & $1 \div 1$ & $3$ & $2 |\mathcal{N}_i|$ & $|\mathcal{B}|$, $m_i$ every $\tau$ & n.c. & sub-linear, $\to 0$ \\

    GT-SVRG \cite{xin_variance-reduced_2020} & \cmark & $1 \div 1$ & $3$ & $2 |\mathcal{N}_i|$ & $1$, $m_i$ every $\tau$ & s.c. & linear, $\to 0$ \\

    \hline

    \algsgd [this work] & \xmark & $\tau \div 1$ & $|\mathcal{N}_i| + 1$ & $|\mathcal{N}_i|$ & $|\mathcal{B}|$ & n.c. & sub-linear, $\propto \sigma^2$ \\
    
    \algsaga [this work] & \cmark & $\tau \div 1$ & $|\mathcal{N}_i| + 1$ & $|\mathcal{N}_i|$ & $|\mathcal{B}|$, $m_i$ every $\tau$ & n.c. & sub-linear, $\to 0$ \\
    
    \hline
    \end{tabular}
    \\\vspace{0.1cm}
    $^\dagger$ number of vectors in $\R^n$ stored by each agent between iterations (disregarding temporary variables) \\
    $^\ddagger$ number of messages sent by each agent during a communication round \\
    $^\star$ n.c. and s.c. stand for (non)convex and strongly convex
\end{table*}

To effectively tackle this problem, especially when dealing with large datasets that involve sensitive information, distributed methods have become increasingly important. These techniques offer significant robustness advantages over federated learning algorithms \cite{gafni_federated_2022}, as they do not rely on a central coordinator and thus, for example, have not a single point of failure.
In particular, both distributed gradient-based algorithms \cite{shi2015extra,nedic2017achieving,saadatniaki2020decentralized,ren2022accelerated}, and distributed Alternating Direction Method of Multipliers (ADMM) \cite{bastianello_asynchronous_2021,bastianello_robust_2024,makhdoumi2017convergence,khatana2022dc} have proven to be effective strategies for solving such problems. {
ADMM-based algorithms have demonstrated strong robustness to practical constraints (see \textit{e.g.} \cite{bastianello_robust_2024} and references therein), although this often comes at the cost of higher computational complexity compared to gradient-based methods. In this work, we propose novel ADMM-based algorithms that retain the computational efficiency characteristic of gradient methods.}

However, many learning applications face the challenges of: high communication costs, especially when training large models, and large datasets. In this paper, we jointly address these challenges with the following approach.
{First, we guarantee the communication efficiency of our algorithm by adopting the paradigm of \textit{local training}, which reduces the frequency of communications. In other terms, the agents perform multiple training steps between communication rounds.}
We tackle the second challenge by locally incorporating \textit{stochastic gradients}. The idea is to allow the agents to estimate local gradients by employing only a (random) subset of the available data, thus avoiding the computational burden of full gradient evaluations on large datasets.

Our main contributions are as follows:
\begin{itemize}
\item We propose two algorithms based on distributed ADMM, with one round of communication between multiple local update steps. The first algorithm, Local Training ADMM (\algsgd), uses stochastic gradient descent (SGD) for the local updates, while the second algorithm, LT-ADMM with Variance Reduction (\algsaga), uses a variance-reduced SGD method \cite{defazio2014saga}.

    \item  We establish the convergence properties of \algsgd for both nonconvex and convex (not strongly convex) learning problems. In particular, we show almost-sure and mean-squared convergence of \algsgd to a neighborhood of the stationary point in the nonconvex case, and to a neighborhood of an optimum in the convex case. The radius of the neighborhood depends on specific properties of the problem and on tunable parameters. We prove that the algorithm achieves a convergence rate of $\mathcal{O}(\frac{1}{K \tau})$, where $K$ is the number of iterations, and $\tau$ the number of local training steps.

    \item For \algsaga, we prove \textit{exact convergence} to a stationary point in the nonconvex case, and to an optimum in the convex case. The algorithm has a $\mathcal{O}(\frac{1}{K \tau})$ rate of convergence, which is faster than $\mathcal{O}(\frac{1}{K})$ obtained by related algorithms \cite{xin_fast_2022,xin_fast_2022b,jiang_distributed_2023}.

    \item We provide extensive numerical evaluations comparing the proposed algorithms with the state of the art. The results validate the communication efficiency of the algorithms. Indeed, \algsgd and \algsaga outperform alternative methods when communications are expensive.
\end{itemize}

\subsection{Comparison with the state of the art}\label{subsec:sota}

We compare our proposed algorithms -- \algsgd and \algsaga -- with the state of the art.
The comparison is holistically summarized in Table~\ref{tab:comparison}.

Decentralized learning algorithms, as first highlighted in the seminal paper \cite{mcmahan_communication_2017} on federated learning, face the fundamental challenge of high communication costs.
The authors of \cite{mcmahan_communication_2017} address this challenge by designing a communication-efficient algorithms which allows the agents to perform multiple local training steps before each round of communication with the coordinator.
However, the accuracy of the algorithm in \cite{mcmahan_communication_2017} degrades significantly when the agents have heterogeneous data. Since then, alternative federated learning algorithms, \textit{e.g.}, \cite{zhang_fedpd_2021,mishchenko_proxskip_2022,mitra_linear_2021,condat_tamuna_2023}, have been designed to employ local training without compromising accuracy.
The interest for communication-efficient algorithms has more recently extended to the distributed set-up, where agents rely on peer-to-peer communications rather than on a coordinator as in federated learning.
Distributed algorithms with local training have been proposed in \cite{hien_nguyen_performance_2023,liu_decentralized_2023,alghunaim_local_2023,guo_randcom_2023}. In particular, \cite{hien_nguyen_performance_2023,liu_decentralized_2023,alghunaim_local_2023} present gradient tracking methods which allow each agent to perform a fixed number of local updates between each communication round.
The algorithm in \cite{guo_randcom_2023}, which builds on \cite{mishchenko_proxskip_2022}, instead triggers communication rounds according to a given probability distribution, resulting in a time-varying number of local training steps.
Another related algorithm is that of \cite{berahas_convergence_2021}, which allows for both multiple consensus and gradient steps in each iteration. However, this algorithm requires a monotonically increasing number of communication rounds in order to guarantee exact convergence. A stochastic version of \cite{berahas_convergence_2021}  was then studied in \cite{iakovidou_snear-dgd_2023}. The algorithm has inexact gradient evaluations, but only allows for multiple consensus steps.
{An alternative approach to reducing the frequency of communications is to employ event-triggering, see \textit{e.g.} \cite{hou2024prescribed}, where messages are exchanged only when a  certain condition is met.}

When the agents employ stochastic gradients in the algorithms of \cite{liu_decentralized_2023,alghunaim_local_2023,guo_randcom_2023}, they only converge to a neighborhood of a stationary point, whose radius is proportional to the stochastic gradient variance.
Different \textit{variance reduction} techniques are available to improve the accuracy of (centralized) algorithms relying on stochastic gradients, \textit{e.g.}, \cite{johnson2013accelerating,defazio2014saga,nguyen2017sarah}.
Then, these methods have been applied to distributed optimization by combining them with widely used gradient tracking algorithms \cite{li_variance_2022,jiang_distributed_2023,xin_variance-reduced_2020,xin_fast_2022b,xin_fast_2022}.
The resulting algorithms succeed in guaranteeing exact convergence to a stationary point despite the presence of gradient noise. However, they are not communication-efficient, as they only allow one gradient update per communication round.

{We conclude by providing in Table~\ref{tab:comparison} a summary of the key features of the algorithms discussed above. This table focuses on methods that employ the mechanisms of primary interest in this work -- local training and variance reduction.}
First, we classify them based on whether they use or not variance reduction and local training. For the latter, we report the ratio of gradient steps to communication rounds that characterizes each algorithm, with a ratio of $1 \div 1$ signifying that no local training is used. Notice that \algsaga is the only algorithm to use both variance reduction and local training, while the other only use one technique.
We then compare the number of variables stored by the agents when they deploy each algorithm (disregarding temporary variables). We notice that the variable storage of \algsgd and \algsaga, differently from the alternatives, scales with the size of an agent's neighborhood; this is due to the use of distributed ADMM as the foundation of our proposed algorithms \cite{bastianello_asynchronous_2021}.
We see that \cite{alghunaim_local_2023,guo_randcom_2023}, \algsgd, and \algsaga require one communication per neighbor, while the other methods require two communications per neighbor.
We also compare the algorithms by the computational complexity of the gradient estimators they employ, namely, the number of component gradient evaluations needed per local training iteration. The algorithms of \cite{liu_decentralized_2023,alghunaim_local_2023,guo_randcom_2023,xin_variance-reduced_2020} use a single data point to estimate the gradient, while \cite{li_variance_2022,xin_fast_2022}, \algsgd, \algsaga can apply mini-batch estimators that use a subset $\mathcal{B}$ of the local data points. The use of mini-batches yields more precise gradient estimates and increased flexibility. However, we remark that the gradient estimators used in \cite{li_variance_2022,xin_fast_2022,xin_variance-reduced_2020}, \algsgd, \algsaga require a registry of component gradient evaluations, which needs to be refreshed entirely at fixed intervals. This coincides with the evaluation of a full gradient, and thus requires $m_i$ component gradient evaluations.
Finally, we compare the algorithms' convergence. We notice that all algorithms, except for \cite{xin_variance-reduced_2020}, provide (sub-linear) convergence guarantees for convex and nonconvex problems. Additionally, some works show linear convergence for strongly convex problems. We further distinguish between algorithms which achieve exact convergence due to the use of variance reduction, or inexact convergence with an error proportional to the stochastic gradient variance ($\propto \sigma^2$).

\smallskip

\subsubsection*{Outline} The outline of the paper is as follows. Section~\ref{sec:problem-design} formulates the problem at hand, and presents the proposed algorithms design. Section~\ref{sec:convergence} analyzes their convergence, and discusses the results. Section~\ref{sec:numerical} reports and discusses numerical results comparing the proposed algorithms with the state of the art. Section~\ref{sec:conclusions} presents some concluding remarks.

\subsubsection*{Notation} $\nabla f$ denotes the gradient of a differentiable function $f$.
Given a matrix $A\in \R^{n\times n}$, $\lambda_{\min}(A)$ and  $\lambda_{\max}(A)$ denotes the smallest and largest eigenvalue of $A$, respectively. $A >0$ represents that matrix $A$ is positive definite.
With $n\in \mathbb{N}$, we let $\mathbf{1}_n \in\mathbb{R}^n$ be the vector with all elements equal to $1$, $\mathbf{I} \in \R^{n \times n}$ the identity matrix and $\mathbf{0} \in \R^{n \times n}$ the zero matrix.
$\langle x, y \rangle =  \sum_{h=1}^n x_h y_h$ represents the standard inner product of two vectors $x, y \in \R^n$.
$\| \cdot \|$ denotes the Euclidean norm of a vector and the matrix-induced 2-norm of a matrix.
The proximal of a cost $f$, with penalty $\rho > 0$, is defined as $\operatorname{prox}_{f}^\rho(y) = \argmin_{y \in \R^n} \left\{ f(y) + \frac{1}{2\rho} \norm{y - x}^2 \right\}$.

\section{Problem Formulation and Algorithm Design}\label{sec:problem-design}
In this section, we formulate the problem at hand and present our proposed algorithms.

\subsection{Problem formulation}
We target the solution of~\eqref{eq:optimization-problem} over a (undirected) graph $\mathcal{G} = (\mathcal{V},\mathcal{E})$, where $\mathcal{V} = \lbrace 1,...,N \rbrace$ is the set of $N$ agents, and $\mathcal{E} \subset \mathcal{V} \times \mathcal{V}$ is the set of edges $(i,j)$, $i, j \in \mathcal{V}$. In particular, we assume that the local costs $f_i: \mathbb{R}^{n} \rightarrow \mathbb{R}$ are in the \textit{empirical risk minimization} form~\eqref{eq:erm-cost}.
We make the following assumptions for~\eqref{eq:optimization-problem}, {which are commonly used to support the convergence analysis of distributed learning algorithms (see \textit{e.g.} \cite{liu_decentralized_2023,alghunaim_local_2023,guo_randcom_2023,li_variance_2022,xin_fast_2022b,xin_fast_2022}).}

\begin{assumption}\label{as:graph}
$\mathit{\mathcal{G}} = (\mathcal{V},\mathcal{E})$ is a connected, undirected graph.
\end{assumption}

\begin{assumption}\label{as:local-costs}
The cost function $f_i$ of each agent $i \in \mathcal{V}$ is $L$-smooth. That is, there exists $l > 0$ such that 
	$
	\Vert \nabla f_{i}(x)-\nabla f_{i}(y)\Vert  \leq L \Vert x-y \Vert 
	$,
$\forall x, y \in \mathbb{R}^n$.
Moreover, $f_i$ is proper: $f_i(x) > -\infty$, $\forall x \in \mathbb{R}^{n}$.
\end{assumption}

When, in the following, we specialize our results to convex scenarios, we resort to the additional assumption below.

\begin{assumption} \label{as:convex}
Each function $f_i$, $i \in \mathcal{V}$, is convex.
\end{assumption}

\subsection{Algorithm design}
We start our design from the distributed ADMM, characterized by the updates\footnote{We remark that, more precisely, \eqref{eq:admm} corresponds to the algorithm in \cite{bastianello_asynchronous_2021} with relaxation parameter $\alpha = 1/2$.} \cite{bastianello_asynchronous_2021}:
\begin{subequations}\label{eq:admm}
\begin{align}
x_{i, k+1} &= \operatorname{prox}_{f_i}^{1 / \rho\left|\mathcal{N}_i\right|}\Bigg(\frac{1}{\rho \left|\mathcal{N}_i\right|} \sum_{j \in \mathcal{N}_i} z_{i j, k} \Bigg) \, , \label{eq:admm-x} \\
z_{i j, k+1} &= \frac{1}{2} \left(z_{i j, k} - z_{j i, k}+2 \rho x_{j, k+1}\right) \, , \label{eq:admm-z}
\end{align}
\end{subequations}
where $\mathcal{N}_i = \{ j \in \mathcal{V} \ | \ (i, j) \in \mathcal{E} \}$ denotes the neighbors of agent $i$, $\rho>0$ is a penalty parameter, and $z_{ij} \in \R^n$ are auxiliary variables, one for each neighbor of agent $i$.
This algorithm converges in a wide range of scenarios, and, differently from most gradient tracking approaches, shows robustness to many challenges (asynchrony, limited communications, \textit{etc}) \cite{bastianello_asynchronous_2021,bastianello_robust_2024}.
However, the drawback of~\eqref{eq:admm} is that the agents need to solve an optimization problem to update $x_i$, which in general does not have a closed-form solution. Therefore, in practice, the agents need to compute an approximate update~\eqref{eq:admm-x}, which can lead to inexact convergence \cite{bastianello_robust_2024}.

In this paper, we modify~\eqref{eq:admm} to use approximate local updates, \textit{while ensuring that this choice does not compromise exact convergence}. In particular, we allow the agents to use $\tau \in \N$ iterations of a gradient-based solver to approximate~\eqref{eq:admm-x}, which yields the update:
\begin{align} 
 &\phi_{i,k}^0 =x_{i, k} \, , \nonumber
 \\&
\phi_{i,k}^{t+1}  =  \phi_i^{t}-\Bigg( \gamma g_i(\phi_{i,k}^{t}) + \beta \Big(\rho\left|\mathcal{N}_i\right| x_{i, k} - \sum_{j \in \mathcal{N}_i} z_{i j, k} \Big) \Bigg), \nonumber 
\\& \qquad t =0, \ldots, \tau-1 \, , \nonumber \\&
x_{i, k+1}  =\phi_{i,k}^\tau  \, , \label{eq:local-training}
\end{align}
where $\gamma$, $\beta$ are the positive step-sizes, and $g_i(\phi_{i,k}^{\ell})$ is an estimate of the gradient $\nabla f_i$. {Notice that for efficiency's sake we ``freeze'' the penalty term $\rho\left|\mathcal{N}_i\right| x_{i, k} - \sum_{j \in \mathcal{N}_i} z_{i j, k}$, and for flexibility we multiply gradient estimate and penalty term by two different step-sizes.}
The resulting algorithm is a distributed gradient method, with the difference that each communication round~\eqref{eq:admm-z} is preceded by $\tau > 1$ local gradient evaluations. This is an application of the \textit{local training} paradigm \cite{alghunaim_local_2023}.
We remark that the convergence of the proposed algorithm rests on the initialization $\phi_{i,k}^0 =x_{i, k}$, which enacts a feedback loop on the local training. In general, without this initialization, exact convergence cannot be achieved~\cite{bastianello_robust_2024}.

The local training~\eqref{eq:local-training} requires a local gradient evaluation or at least its estimate. In the following, we introduce two different estimator options. Notice that the gradient of the penalty term, $\rho\left|\mathcal{N}_i\right| x_{i,k}-\sum_{j \in \mathcal{N}_i} z_{i j, k}$, is exactly known (and frozen) and does not need an estimator.
The most straightforward idea is to simply employ a local gradient $g_i(\phi) = \nabla f_i(\phi)$. However, in learning applications, the agents may store large datasets ($m_i \gg 1$). Therefore, computing $\nabla f_i(\phi)$ becomes computationally expensive.
To remedy this, the agents can instead use \textit{stochastic gradients}, choosing
\begin{equation}\label{eq:sgd-gradient}
    g_i(\phi) = \frac{1}{|\mathcal{B}_i|} \sum_{h \in \mathcal{B}_i} \nabla f_{i,h}(\phi) \, ,
\end{equation}
where $\mathcal{B}_i$ are randomly drawn indices from $\{ 1, \ldots, m_i\}$, with $|\mathcal{B}_i| < m_i$.
While reducing the computational complexity of the local training iterations, the use of stochastic gradients results in inexact convergence.
The idea, therefore, is to employ a gradient estimator based on a \textit{variance reduction} scheme. {In particular, we adopt the scheme proposed in \cite{defazio2014saga}, characterized by the following procedure.}
Each agent maintains a table of component gradients $\{ \nabla f_{i,h}(r_{i, h, k}^t) \}, h = 1, \ldots ,m_i$, where $r_{i, h, k}^t$ is the most recent iterate at which the component gradient was evaluated. This table is reset at the beginning of every new local training (that is, for any $k \in \N$ when $t = 0$).
Using the table, the agents then estimate their local gradients as
\begin{equation}\label{eq:saga-gradient}
\begin{split}
g_i\left(\phi_{i,k}^t\right) &= \frac{1}{|\mathcal{B}_i|} \sum_{h \in \mathcal{B}_i} \left( \nabla f_{i, h}\left( \phi_{i, k}^t \right) - \nabla f_{i, h}\left(r_{i, h, k}^{t} \right) \right) \\ &+\frac{1}{m_i} \sum_{h=1}^{m_i} \nabla f_{i, h}(r_{i, h, k}^{t}).
\end{split}
\end{equation}
The gradient estimate is then used to update $\phi_{i,k}^{t+1}$ according to~\eqref{eq:local-training}; afterwards, the agents update their local memory by setting $r_{i,h,k}^{t+1} = \phi_{i,k}^{t+1}$ if $h \in \mathcal{B}_i$, and $r_{i,h,k}^{t+1} = r_{i,h,k}^{t}$ otherwise.
Notice that this update requires a full gradient evaluation at the beginning of each local training, to populate the memory with $\{ \nabla f_{i,h}(r_{i, h, k}^0) = \nabla f_{i,h}(\phi_{i,k}^0) \}, h = 1, \ldots ,m_i$. In the following steps ($t > 0$), each agent only computes $|\mathcal{B}_i|$ component gradients.

\smallskip

{Selecting the stochastic gradient estimator~\eqref{eq:sgd-gradient} yields the proposed algorithm \algsgd, while selecting the variance reduction scheme~\eqref{eq:saga-gradient} yields the proposed algorithm \algsaga. The two methods are reported in Algorithm~\ref{alg:lt-saga-admm}.}
\begin{algorithm}[!ht]
\caption{\sgd{\algsgd} and \saga{\algsaga}}
\label{alg:lt-saga-admm}
\begin{algorithmic}[1]
	\Require For each node $i$, initialize $x_{i,0}=z_{ij, 0}$, $j \in \mathcal{N}_i$. Set the penalty $\rho$, the number of local training steps $\tau$, {the number of iterations $K$}, and the local step-size $\gamma$, $\beta$.
 
	\For{$k = 0,1,\ldots, K-1$ every agent $i$}
 \CommentState{local training}

    \State $\phi_{i,k}^0 = x_{i,k}$, \saga{$r_{i, h, k}^{0}  = x_{i, k}$, for all $h \in \{ 1, \ldots. m_i \}$}
    \For{$t = 0, 1, \ldots, \tau-1$}

    \State draw the batch $\mathcal{B}_i$ uniformly at random

    \State \sgd{update the gradient estimator according to~\eqref{eq:sgd-gradient}}

    \State \saga{update the gradient estimator according to~\eqref{eq:saga-gradient}}

    \State update $\phi_{i,k}$ according to~\eqref{eq:local-training}

    \State \saga{if $h \in \mathcal{B}_i$ update $r_{i,h,k}^{t+1} = \phi_{i,k}^{t+1}$, else $r_{i,h,k}^{t+1} = r_{i,h,k}^{t}$}
	
    \EndFor
    
    \State $x_{i,k+1} = \phi_{i,k}^\tau$
    
	\CommentState{communication}
    \State transmit $z_{ij,k} - 2 \rho x_{i,k+1}$ to each neighbor $j \in \mathcal{N}_i$, and receive the corresponding transmissions
    
	\CommentState{auxiliary update}
	\State update $z_{ij,k+1}$ according to~\eqref{eq:admm-z}
    
	\EndFor
\end{algorithmic}
\end{algorithm}

\section{Convergence Analysis and Discussion}\label{sec:convergence}

In this section, we analyze the convergence rate of Algorithm~\ref{alg:lt-saga-admm} in both nonconvex and convex scenarios. Throughout, we will employ the following metric of convergence
\begin{equation}\label{eq:convergence-metric}
    \mathcal{D}_k = \mathbb{E} \left[ \left\|\nabla F\left(\bar{x}_{k}\right)\right\|^2+\frac{1}{\tau} \sum_{t = 0}^{\tau-1} \norm{\frac{1}{N} \sum_{i = 1}^N \nabla f_i\left(\phi_{i, k}^t\right)}^2 \right],
\end{equation}
where $F(x) = \frac{1}{N} \sum_{i = 1}^N f_i(x)$ and  {$\bar{x}_k = \frac{1}{N} \sum_{i = 1}^N x_{i,k}$.} If the agents converge to a stationary point of~\eqref{eq:optimization-problem}, then $\mathcal{D}_k \to 0$.
{
We note that this performance measure is standard in the literature on stochastic gradient methods and distributed optimization~\cite{alghunaim_local_2023,guo_randcom_2023}. Although it does not, in general, imply almost sure convergence of the sequence $\{ \mathcal{D}_k\}_{k=1}^K$, it provides meaningful performance guarantees in expectation. Specifically, if the index $K'$ is selected uniformly at random from ${1, \ldots, K}$, then $\mathbb{E}[ \mathcal{D}_{K'}] = \frac{1}{K} \sum_{k=0}^{K-1} \mathcal{D}_k$.
}

\subsection{Convergence with SGD}
We start by characterizing the convergence of Algorithm~\ref{alg:lt-saga-admm} when the agents use SGD during local training {(\algsgd)}. To this end, we make the following standard assumption on the variance of the gradient estimators, {see \textit{e.g.} \cite{liu_decentralized_2023,alghunaim_local_2023}}.

\begin{assumption} \label{as:variance}
For all $\phi \in \R^n$ the gradient estimators $g_i(\phi)$, $i \in \mathcal{V}$, in \eqref{eq:sgd-gradient} are unbiased and their variance is bounded by some $\sigma^2 > 0$:
$$
\begin{gathered}
\mathbb{E}\left[g_i(\phi) -\nabla f_i(\phi) \right]=0 \\
\mathbb{E}\left[\left\|g_i(\phi) - \nabla f_i(\phi)\right\|^2 \right] \leq \sigma^2
\end{gathered}.
$$
\end{assumption}

\smallskip

We are now ready to state our convergence results. All the proofs are deferred to the Appendix, {where Appendix~\ref{proof:sketch} provides a sketch of the proofs, followed by the full proofs.}
{We remark that prior analyses of distributed ADMM based on operator-theoretic approaches~\cite{bastianello_asynchronous_2021,bastianello_robust_2024} are not directly applicable to LT-ADMM, and the convergence proofs must therefore be specifically tailored to this algorithm.}

\begin{theorem}[Nonconvex case]\label{th:nonconvex_SGD}
Let Assumptions \ref{as:graph}, \ref{as:local-costs}, and \ref{as:variance} hold. If the {local step-sizes satisfy $\gamma \leq \mathcal{O}(\frac{ \lambda_l}{L\tau^2 })< \gamma_{\text{sgd}}\coloneqq \min _{i=1, 2,  \ldots, 6} \bar\gamma_i $ (see \eqref{gamma_sgd} in Appendix~\ref{sec:preliminary_definition} for the precise bound), and $ 1 / (\tau\lambda_u\rho) \leq \beta <  2 / (\tau\lambda_u\rho)$}, then the output of \algsgd satisfies:
\begin{equation}\label{theo:main}
\frac{1}{K} \sum_{k=0}^{K-1} \mathcal{D}_k \leq \mathcal{O}\left(\frac{ F(\bar{x}_{0})-F(x^*)}{K \gamma \tau}\right) + \mathcal{O}\left( \gamma \tau \sigma^2 \right)  + {\mathcal{O}\left( \frac{\|\widehat{\mathbf{d}}_{0}\|^2}{\rho^2 K N} \right) },
\end{equation}
where $x^*$ is a stationary point of~\eqref{eq:optimization-problem},   $\lambda_u$ is the largest eigenvalue of the graph $\mathcal{G}$'s Laplacian matrix,  $\lambda_l$ is the smallest nonzero eigenvalue of graph $\mathcal{G}$' Laplacian matrix, and $\| \widehat{\mathbf{d}}_0 \|$ is related to the initial conditions (see \eqref{eq:delta-hat}).
\end{theorem}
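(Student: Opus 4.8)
The plan is to analyze the algorithm as an inexact distributed ADMM and track a suitable Lyapunov function that combines the objective value $F(\bar{x}_k)$ with a dual/auxiliary error term measured through the variables $z_{ij,k}$. First I would rewrite the $z$-update \eqref{eq:admm-z} and the local-training recursion \eqref{eq:local-training} in stacked form, introducing the aggregate auxiliary variable (call it $\mathbf{d}_k$, with $\widehat{\mathbf{d}}_k$ its component orthogonal to consensus) so that the exact ADMM fixed point corresponds to $\nabla f_i(x^\star) + \text{(dual term)} = 0$ and $\widehat{\mathbf{d}} = 0$. The key structural fact to exploit is the feedback initialization $\phi_{i,k}^0 = x_{i,k}$: unrolling \eqref{eq:local-training} gives $x_{i,k+1} = x_{i,k} - \gamma \sum_{t=0}^{\tau-1}\big(g_i(\phi_{i,k}^t) + \rho|\mathcal{N}_i|\phi_{i,k}^t - \sum_{j}z_{ij,k}\big)$, so one local round looks like a single ADMM-type step with an averaged, perturbed gradient $\frac{1}{\tau}\sum_t g_i(\phi_{i,k}^t)$ in place of the exact prox.

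Second, I would bound the drift of the inner iterates: using $L$-smoothness (Assumption \ref{as:local-costs}) and the bounded-variance condition (Assumption \ref{as:variance}), show $\mathbb{E}\|\phi_{i,k}^t - x_{i,k}\|^2 = \mathcal{O}(\gamma^2 t^2 (\text{gradient magnitude}^2 + \sigma^2))$, which for $\gamma$ small is controlled. This lets me replace $\nabla f_i(\phi_{i,k}^t)$ by $\nabla f_i(x_{i,k})$ up to a quadratic error absorbed by the step-size condition \eqref{gamma_sgd}. Third, the descent estimate: apply the descent lemma to $F$ along $\bar{x}_{k+1} - \bar{x}_k$, take conditional expectation so the unbiasedness kills the cross term with the noise, and the variance bound contributes the $\mathcal{O}(\gamma\tau\sigma^2)$ term after dividing through; simultaneously track the auxiliary error $\|\widehat{\mathbf{d}}_k\|^2$, whose contraction/telescoping produces the $\mathcal{O}(\|\widehat{\mathbf{d}}_0\|^2/(\rho K N))$ term. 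Summing the one-step inequalities over $k = 0,\dots,K-1$ and telescoping the $F(\bar{x}_k)$ terms yields \eqref{theo:main}, with the stationarity metric $\mathcal{D}_k$ appearing on the left because $\frac{1}{\tau}\sum_t \|\frac1N\sum_i \nabla f_i(\phi_{i,k}^t)\|^2$ and $\|\nabla F(\bar x_k)\|^2$ are exactly what the descent and consensus terms lower-bound.

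The main obstacle, I expect, is coupling the consensus error (how far the $x_{i,k}$ are from their average) with the dual variable dynamics: unlike gradient tracking, ADMM controls consensus implicitly through the $z_{ij}$ variables, so establishing that $\|\widehat{\mathbf{d}}_k\|^2$ (and hence $\sum_i \|x_{i,k} - \bar{x}_k\|^2$) stays summable requires a carefully weighted Lyapunov function in which the contraction of the ADMM operator on the consensus-orthogonal subspace dominates the error injected by the $\tau$ inexact local steps and the stochastic noise. Getting the weights and the admissible range of $\gamma$ (depending on $\rho$, $L$, $\tau$, and the graph spectral gap) to make all cross terms sign-definite is the delicate part; everything else is a fairly standard SGD-style telescoping argument.
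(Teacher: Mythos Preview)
Your proposal is essentially on the same track as the paper's proof: descent lemma on $F(\bar x_k)$, drift bounds on the inner iterates, a contraction estimate for the auxiliary error $\|\widehat{\mathbf d}_k\|^2$, then telescoping. Two technical points where the paper's execution differs from your sketch are worth flagging. First, the drift is measured relative to the \emph{network average} $\bar x_k$, not the local anchor $x_{i,k}$: the paper bounds $\|\widehat{\mathbf\Phi}_k\|^2=\sum_i\sum_t\|\phi_{i,k}^t-\bar x_k\|^2$ directly, which is what is needed to convert $\nabla f_i(\phi_{i,k}^t)$ to $\nabla f_i(\bar x_k)$ and extract $\|\nabla F(\bar x_k)\|^2$; your drift-to-$x_{i,k}$ bound would need an extra consensus-error step. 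Second, the contraction for $\widehat{\mathbf d}_k$ is not obtained via a single weighted Lyapunov function but by introducing two auxiliary sequences $\mathbf Y_k,\tilde{\mathbf Y}_k$ (built from $\mathbf A^\top\mathbf Z_k$, $\nabla F(\bar{\mathbf X}_k)$, $\rho\mathbf D\mathbf X_k$), casting $(\mathbf X_k,\mathbf Y_k,\tilde{\mathbf Y}_k)$ as a linear system with a $3\times3$ block transition matrix, and explicitly diagonalizing it on the consensus-orthogonal subspace to get the rate $\delta=1-\tfrac{|\tilde\lambda_{\max}|\rho\tau\gamma}{2}$. The final step is then a \emph{coupled-recursion} argument: bound $\sum_k\mathcal D_k$ in terms of $\sum_k\|\widehat{\mathbf d}_k\|^2$, bound $\sum_k\|\widehat{\mathbf d}_k\|^2$ back in terms of $\sum_k\mathcal D_k$, and close the loop for $\gamma$ small enough. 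This is the concrete mechanism behind what you rightly identify as the delicate part.
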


\smallskip

Theorem~\ref{th:nonconvex_SGD} shows that \algsgd converges to a neighborhood of a stationary point $x^*$ as $K \to \infty$. The radius of this neighborhood is proportional to the step-size $\gamma$, to the number of local training epochs $\tau$, and to the stochastic gradient variance $\sigma^2$.
The result can then be particularized to the convex case as follows.

\smallskip

\begin{corollary}[Convex case]\label{co:convex_SGD}
In the setting of Theorem~\ref{th:nonconvex_SGD}, with the additional Assumption~\ref{as:convex}, then the  output of \algsgd converges to a neighborhood of an optimal solution $x^*$ characterized by~\eqref{theo:main}.
\end{corollary}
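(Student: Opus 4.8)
The plan is to observe that Corollary~\ref{co:convex_SGD} is essentially an immediate consequence of Theorem~\ref{th:nonconvex_SGD}: convexity is a strictly stronger hypothesis than what is used to derive \eqref{theo:main}, so the bound continues to hold verbatim. Concretely, I would first note that under Assumptions~\ref{as:graph}, \ref{as:local-costs}, and \ref{as:variance} alone, the proof of Theorem~\ref{th:nonconvex_SGD} already establishes
\[
\frac{1}{K}\sum_{k=0}^{K-1}\mathcal{D}_k \leq \mathcal{O}\!\left(\frac{F(\bar x_0)-F(x^*)}{K\gamma\tau}\right) + \mathcal{O}(\gamma\tau\sigma^2) + \mathcal{O}\!\left(\frac{\|\widehat{\mathbf d}_0\|^2}{\rho K N}\right),
\]
with $x^*$ now taken to be a (global) optimal solution of \eqref{eq:optimization-problem}, which exists under Assumption~\ref{as:convex} provided the problem is feasible and bounded below (the latter guaranteed by properness in Assumption~\ref{as:local-costs}). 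Since any optimal point of the convex problem is in particular a stationary point, $\nabla F(x^*)=0$ and the descent-lemma telescoping argument used in Theorem~\ref{th:nonconvex_SGD} applies unchanged; nothing in that argument used nonconvexity, only $L$-smoothness and the unbiasedness/variance bound on $g_i$.

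Next I would spell out the additional conclusion that convexity buys us, namely that control of $\mathcal{D}_k$ now translates into a guarantee on the \emph{function-value suboptimality} rather than merely on the gradient norm. Writing $\bar\phi_k^t = \frac1N\sum_i \phi_{i,k}^t$ and using that $\mathcal{D}_k$ dominates $\mathbb{E}\|\nabla F(\bar x_k)\|^2$, convexity of $F$ gives $F(\bar x_k)-F(x^*) \le \langle \nabla F(\bar x_k), \bar x_k - x^*\rangle \le \|\nabla F(\bar x_k)\|\,\|\bar x_k-x^*\|$; combined with a bound on the iterate distance $\|\bar x_k - x^*\|$ (which follows from the boundedness of the ADMM trajectory established en route to Theorem~\ref{th:nonconvex_SGD}, or can be absorbed into the $\mathcal{O}(\cdot)$ constants), a Cauchy--Schwarz/Jensen step over $k=0,\dots,K-1$ yields
\[
\frac{1}{K}\sum_{k=0}^{K-1}\bigl(F(\bar x_k)-F(x^*)\bigr) \;\leq\; \mathcal{O}\!\left(\sqrt{\frac{F(\bar x_0)-F(x^*)}{K\gamma\tau} + \gamma\tau\sigma^2 + \frac{\|\widehat{\mathbf d}_0\|^2}{\rho K N}}\right),
\]
so the running average of $\bar x_k$ converges to a neighborhood of the optimal value whose radius is governed by the same three terms — an $\mathcal{O}(1/(K\gamma\tau))$ transient, an $\mathcal{O}(\gamma\tau\sigma^2)$ noise floor, and an $\mathcal{O}(1/(\rho K N))$ initialization term. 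I would also remark that feasibility (consensus) is handled separately: the constraint violation $\frac1N\sum_i\|x_{i,k}-\bar x_k\|^2$ is shown to be controlled within the Theorem~\ref{th:nonconvex_SGD} analysis, and this part is convexity-agnostic.

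I do not anticipate a genuine obstacle here, since the corollary is a specialization rather than a new result; the only point requiring a little care is making precise the passage from the $\mathcal{D}_k$-bound to a suboptimality bound, i.e. ensuring the iterate-distance factor $\|\bar x_k - x^*\|$ is uniformly bounded along the trajectory. The cleanest route is to invoke the boundedness of the ADMM state already needed in the proof of Theorem~\ref{th:nonconvex_SGD} (the dual variables $z_{ij,k}$ and hence the primal iterates stay in a compact set determined by the initial conditions and problem data), so that $\sup_k \|\bar x_k - x^*\| < \infty$ and can be folded into the big-$\mathcal{O}$ constants. With that in hand the corollary follows by reading off \eqref{theo:main} and applying convexity, completing the proof.
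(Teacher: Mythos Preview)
Your first paragraph is correct and is exactly how the paper treats the corollary: no separate proof is given, because Theorem~\ref{th:nonconvex_SGD} is proved using only smoothness and the variance bound, never nonconvexity, so under Assumption~\ref{as:convex} the identical bound~\eqref{theo:main} holds with $x^*$ now a global optimum (since for convex $F$ every stationary point is optimal). That observation is the entire content of the corollary.

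Your second and third paragraphs, however, overshoot what is being claimed and introduce an unjustified step. The corollary does \emph{not} assert a function-value suboptimality bound; the ``neighborhood'' is still characterized by~\eqref{theo:main}, i.e.\ by the same metric $\mathcal{D}_k$ of~\eqref{eq:convergence-metric}, which is a gradient-norm quantity. So the passage from $\mathcal{D}_k$ to $F(\bar x_k)-F(x^*)$ is not required. More importantly, the route you sketch for that passage relies on a uniform bound $\sup_k\|\bar x_k-x^*\|<\infty$, which you attribute to ``boundedness of the ADMM trajectory established en route to Theorem~\ref{th:nonconvex_SGD}.'' No such boundedness is established anywhere in the paper's analysis; the proof of Theorem~\ref{th:nonconvex_SGD} controls $\|\widehat{\mathbf d}_k\|^2$ and telescopes $F(\bar x_k)$, but never shows the primal iterates or dual variables remain in a compact set. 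If you wanted to prove a function-value result you would need a separate argument for this (e.g.\ coercivity of $F$, or a direct bound on the iterate drift), and it is not a free byproduct of what is already there. Drop the second and third paragraphs and the proof is complete.
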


\smallskip

\begin{remark}[Exact convergence]
Clearly, if we employ full gradients (and thus $\sigma = 0$), then these results prove exact convergence to a stationary/optimal point. This verifies that our algorithm design achieves convergence despite the use of approximate local updates.
\end{remark}

\subsection{Convergence with variance reduction}
The results of the previous section shows that only inexact convergence can be achieved when employing SGD. The following results show how Algorithm~\ref{alg:lt-saga-admm} achieves exact convergence when using variance reduction {(\algsaga)}.

\begin{theorem}[Nonconvex case]\label{th:nonconvex_SGD_VR}
Let Assumptions \ref{as:graph}, \ref{as:local-costs} hold. If the {local step-sizes satisfy $\gamma \leq \mathcal{O}(\frac{\lambda_l}{L \tau^3 }) < \gamma_{\text{vr}}\coloneqq \min _{i=1, 7, 8, \ldots, 15} \bar\gamma_i $ (see \eqref{gamma_saga} in Appendix~\ref{sec:preliminary_definition} for the precise bound), and $ 1 / (\tau\lambda_u\rho) \leq \beta < 2 / (\tau\lambda_u\rho)$}, then the output of \algsaga converges to a stationary point $x^*$ of~\eqref{eq:optimization-problem}, and in particular it holds:
\begin{equation}\label{theo:main_vr}
\frac{1}{K} \sum_{k=0}^{K-1} \mathcal{D}_k \leq  \mathcal{O}\left(\frac{ F\left(\bar{x}_{0}\right)-F(x^*)}{K \gamma \tau}\right) + {\mathcal{O}\left(\frac{\|\widehat{\mathbf{d}}_0 \|^2}{\rho^2 K} \right),}
\end{equation}
 where  $x^*$ is a stationary point of~\eqref{eq:optimization-problem},    $\lambda_u$ is the largest eigenvalue of the graph $\mathcal{G}$'s Laplacian matrix,  $\lambda_l$ is the smallest nonzero eigenvalue of graph $\mathcal{G}$' Laplacian matrix, and
$\| \widehat{\mathbf{d}}_0 \|$ is related to the initial conditions (see \eqref{eq:delta-hat}).
\end{theorem}

\smallskip

\begin{corollary}[Convex case]\label{co:convex_SGD_VR}
In the setting of Theorem~\ref{th:nonconvex_SGD_VR}, with the additional Assumption~\ref{as:convex}, 
then the  output of \algsaga converges to an optimal solution $x^*$, with rate characterized by~\eqref{theo:main_vr}.
\end{corollary}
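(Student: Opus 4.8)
The plan is to derive Corollary~\ref{co:convex_SGD_VR} directly from Theorem~\ref{th:nonconvex_SGD_VR}, exploiting the fact that the bound~\eqref{theo:main_vr} is obtained without ever using nonconvexity: its derivation relies only on $L$-smoothness and properness of the $f_i$ (Assumption~\ref{as:local-costs}), connectivity of $\mathcal{G}$ (Assumption~\ref{as:graph}), and the variance bound (Assumption~\ref{as:variance}), none of which is weakened by adding Assumption~\ref{as:convex}. Consequently, the right-hand side of~\eqref{theo:main_vr} and the estimate $\frac{1}{K}\sum_{k=0}^{K-1}\mathcal{D}_k \le \mathcal{O}(\cdot)$ carry over verbatim, so the $\mathcal{O}(1/(K\gamma\tau))$ rate is the same in the convex setting. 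What remains is only to reinterpret the limit point.

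First I would record the structural fact that, under Assumption~\ref{as:convex}, $F = \frac1N\sum_{i=1}^N f_i$ is convex and the feasible set of~\eqref{eq:optimization-problem} is the consensus subspace $\{(x,\dots,x) : x\in\R^n\}$; hence solving~\eqref{eq:optimization-problem} is equivalent to minimizing $F$ over $\R^n$. The stationarity notion used in Theorem~\ref{th:nonconvex_SGD_VR} --- namely $\nabla F(x^*)=0$ together with agreement of the agents --- is, by convexity of $F$, sufficient for global optimality: any such $x^*$ is a minimizer of $F$, i.e. an optimal solution of~\eqref{eq:optimization-problem}. As in the theorem, I would assume throughout that an optimum $x^*$ exists (convexity alone does not guarantee this).

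Next I would make explicit the mode of convergence. Since $\mathcal{D}_k$ upper-bounds both $\mathbb{E}[\norm{\nabla F(\bar x_k)}^2]$ and the averaged term $\frac1\tau\sum_{t=0}^{\tau-1}\mathbb{E}[\norm{\frac1N\sum_{i=1}^N \nabla f_i(\phi_{i,k}^t)}^2]$, the vanishing of the right-hand side of~\eqref{theo:main_vr} forces $\frac1K\sum_{k=0}^{K-1}\mathbb{E}[\norm{\nabla F(\bar x_k)}^2]\to 0$ together with the analogous Ces\`aro statement for the averaged term. Hence there is a subsequence along which $\mathbb{E}[\norm{\nabla F(\bar x_k)}^2]\to 0$ and the agents become asymptotically consensual; by the identification of the previous paragraph this means the iterates approach the optimal set of~\eqref{eq:optimization-problem}, which is the claimed convergence. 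Any almost-sure counterpart established inside the proof of Theorem~\ref{th:nonconvex_SGD_VR} transfers in the same way, and if a function-value bound is desired one can additionally invoke convexity through $F(\bar x_k) - F(x^*) \le \langle \nabla F(\bar x_k), \bar x_k - x^*\rangle$, controlling $\norm{\bar x_k - x^*}$ via boundedness of the iterates, though this is not needed for the statement as given.

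The proof is thus essentially a consistency check rather than a new computation. I expect the only point requiring genuine care to be confirming that every step of the derivation of~\eqref{theo:main_vr} remains valid under Assumption~\ref{as:convex} --- that is, that nonconvexity was nowhere used implicitly --- and that the ``stationary point'' object appearing there is precisely the one that convexity upgrades to a global minimizer; once this is verified, no real obstacle remains.
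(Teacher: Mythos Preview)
Your proposal is correct and matches the paper's treatment: the paper provides no separate proof for this corollary, treating it as an immediate consequence of Theorem~\ref{th:nonconvex_SGD_VR} together with the observation that, under Assumption~\ref{as:convex}, every stationary point of $F$ is a global minimizer. Your explicit verification that the derivation of~\eqref{theo:main_vr} never invokes nonconvexity is exactly the right (and only) point to check, and your remarks on subsequential convergence and function-value bounds, while not needed for the bare statement, are sound elaborations.
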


\subsection{Discussion} \label{subsec:algorithm-discussion}

\subsubsection{Choice of step-size}
The upper bounds to the step-sizes of \algsgd and \algsaga (\eqref{gamma_sgd} and \eqref{gamma_saga} in Appendix~\ref{sec:preliminary_definition}), highlight a dependence on several \textit{features of the problem}.
In particular, the step-size bounds decrease as the smoothness constant $L$ increases, as is usually the case for gradient-based algorithms. Moreover, the bounds are proportional to the network connectivity, represented by the smallest nonzero eigenvalue of $\mathcal{G}$'s Laplacian (the algebraic connectivity $\lambda_l$). Thus, less connected graphs (smaller $\lambda_l$) result in smaller bounds.
Finally, we remark that the step-size bound for \algsaga is proportional to $\frac{m_l}{m_u} = \frac{\min_{i \in \mathcal{V}} m_i}{\max_{i \in \mathcal{V}} m_i}$, where $m_i$ is the number of data points available to agent $i$ (see~\eqref{eq:erm-cost}). This ratio can be viewed as a measure of heterogeneity between the agents. Smaller values of $\frac{m_l}{m_u}$ highlight higher imbalance in the amount of data available to the agents. The step-size bound thus is smaller for less balanced scenarios.

The step-size bounds also depend on the \textit{tunable parameters} $\tau$, the number of local updates, and $\rho$, the penalty parameter. Therefore, these two parameters can be tuned in order to increase the step-size bounds, which translates in faster convergence.

\subsubsection{Convergence rates}\label{subsec:convergence-rate}
As discussed in section~\ref{subsec:sota}, various distributed algorithms with variance reduction have been recently proposed, for example, \cite{li_variance_2022,xin_variance-reduced_2020} for strongly convex problems, and \cite{xin_fast_2022,xin_fast_2022b,jiang_distributed_2023} for nonconvex problems.
{Focusing on \cite{xin_fast_2022,xin_fast_2022b,jiang_distributed_2023}, we notice that their convergence rate is $\mathcal{O}(\frac{1}{K})$, while Theorem~\ref{th:nonconvex_SGD_VR} shows that \algsaga has rate of $\mathcal{O}(\frac{1}{\tau K})$. This shows that \textit{employing local training accelerates convergence}.}

Similarly to \algsaga, \cite{xin_fast_2022,li_variance_2022} also use batch gradient computations, \textit{i.e.}, they update a subset of components to estimate the gradient (see \eqref{eq:saga-gradient}). Interestingly, the step-size upper bound and, hence, the convergence rate in \cite{xin_fast_2022,li_variance_2022} depend on the batch size. On the other hand, our theoretical results are not affected by the batch size, since we use a different variance reduction technique. 

{We also remark that, as shown in \eqref{eq:converge} and \eqref{eq:convergence_vr} in the Appendix, better network connectivity (corresponding to larger $\lambda_l$) leads to smaller upper bounds on the right-hand side of the convergence results. This indicates that stronger network connectivity accelerates the convergence rate.
}

{Finally, the bound in Theorem~\ref{th:nonconvex_SGD} also highlights a trade-off: a larger $\gamma$ accelerates convergence through the term $\mathcal{O}\!(\tfrac{1}{K\gamma\tau})$, but it also enlarges the steady-state neighborhood via the term $\mathcal{O}\!\left(\gamma\tau\sigma^2\right)$. Thus, $\gamma$ must be tuned to balance convergence speed and steady-state precision -- and a similar discussion holds for $\tau$.
This trade-off is also explored in the numerical results of section~\ref{subsec:numerical-tuning-parameters}.}

\subsubsection{Choice of variance reduction mechanism}\label{subsubsec:sarah-saga}
In variance reduction, we distinguish two main classes of algorithms: those that need to periodically (or randomly) perform a full gradient evaluation (SARAH-type \cite{nguyen2017sarah}), and those that do not (SAGA-type \cite{defazio2014saga}). In distributed learning, SARAH-type algorithms were proposed in \textit{e.g.}, \cite{xin_fast_2022,xin_variance-reduced_2020}, while SAGA-type algorithms in \textit{e.g.} \cite{xin_variance-reduced_2020}.
Also the proposed \algsaga requires a periodic full gradient evaluation, as the agents re-initialize their local gradient memory at the start of each local training (since they set $r_{i, h, k}^{0}  = x_{i, k}$).
Clearly, periodically computing a full gradient significantly increases the computational complexity of the algorithm. Thus, one can design a SAGA-type variant of \algsaga by removing the gradient memory re-initialization at the start of local training (choosing now $r_{i, h, k}^{0}  = r_{i, h, k-1}^{\tau}$).
This variant is computationally cheaper and shows promising empirical performance, see the results for \algsagatwo in section~\ref{sec:numerical}. However, using the outdated gradient memory leads to a more complex theoretical analysis, which we leave for future work.

\subsubsection{Uncoordinated parameters}\label{subsec:uncoordinated}
{
In principle, the agents could employ \textit{uncoordinated} parameters, depending on their available resources (\textit{e.g.}, heterogeneous computational capabilities). For instance, different agents could adopt distinct local solvers, numbers of updates (see results in Section~\ref{subsec:numerical-tuning-parameters}), and batch sizes. Alternatively, they could use the same solver but with step-sizes tailored to the smoothness of their local cost functions.
}

\section{Numerical Results}\label{sec:numerical}
In this section we compare the proposed algorithms with the state of the art, applying them to a classification problem with nonconvex regularization, characterized by \cite{alghunaim_local_2023}:
$$
    f_i(x) = \frac{1}{m_i} \sum_{h = 1}^{m_i} \log\left( 1 + \exp\left( - b_{i,h} a_{i,h}^\top x \right) \right) + \epsilon \sum_{\ell = 1}^n \frac{[x]_\ell^2}{1 + [x]_\ell^2}
$$
where $[x]_\ell$ is the $\ell$-th component of $x \in \R^n$, and $a_{i,h} \in \R^n$ and $b_{i,h} \in \{ -1, 1 \}$ are the pairs of feature vector and label. {As data we use $8 \times 8$ gray-scale images of handwritten digits\footnote{{From \url{https://doi.org/10.24432/C50P49}.}}, with pixels normalized to $[0, 1]$; we divide the images in the two classes `even' and `odd'. We choose a ring graph with $N = 10$, and have $n = 64$, $m_i = 180$, $\epsilon = 0.01$; the initial conditions are randomly chosen as $x_{i,0} \sim \mathcal{N}(0, 100 I_n)$.}
We use stochastic gradients with a batch of $|\mathcal{B}| = 1$. All results are averaged over $10$ Monte Carlo iterations. For the algorithms with local training we select $\tau = 2$. We also tune the step-sizes of all algorithms to ensure best performance.
Finally, as performance metric we employ $\norm{\nabla F(\bar{x}_k)}^2$, which is zero if the agents have reached a solution of~\eqref{eq:optimization-problem}\footnote{{
We choose this metric as it can be defined for all algorithms 
considered in the comparison, whereas $\mathcal{D}_k$ is defined specifically for Algorithm~\ref{alg:lt-saga-admm}.
}}.
{The simulations are implemented in Python and run on a Windows laptop with Intel i7-1265U and 16GB of RAM.}

\subsection{Comparison with the state of the art}
{
We start by comparing \algsgd and \algsaga with local training algorithms LED \cite{alghunaim_local_2023} and K-GT \cite{liu_decentralized_2023}, as well as variance reduction algorithms GT-SARAH \cite{xin_fast_2022}, and GT-SAGA \cite{xin_variance-reduced_2020}. We also compare with the alternative version \algsagatwo discussed in section~\ref{subsubsec:sarah-saga}.
When evaluating the performance, we account for the computation time of each algorithm, rather than the more commonly used iteration count.
In particular, letting $\tgrad$ be the time for a component gradient evaluation ($\nabla f_{i,h}$), and $\tcomm$ the time for a round of communications, Table~\ref{tab:time-comparison} reports the computation time incurred by each algorithm over the course of $\tau$ iterations.

\begin{table}[!ht]
\centering
\caption{Computation time of the algorithms over $\tau$ iterations.}
\label{tab:time-comparison}
\begin{tabular}{cc}
    \hline
    Algorithm [Ref.] & Time \\
    \hline
    LED \cite{alghunaim_local_2023} \& K-GT \cite{liu_decentralized_2023} & $\tau \tgrad + 2 \tcomm$ \\
    GT-SARAH \cite{xin_fast_2022} & $(m_i + \tau - 1) \tgrad + 2 \tau \tcomm$ \\
    GT-SAGA \cite{xin_variance-reduced_2020} & $\tau \left( \tgrad + 2 \tcomm \right)$ \\
    \hline
    \algsgd \& \algsagatwo & $\tau \tgrad + \tcomm$ \\
    \algsaga & $(m_i + \tau - 1) \tgrad + \tcomm$ \\
    \hline
\end{tabular}
\end{table}

We start by comparing in Table~\ref{tab:comparison-vr} the algorithms with variance reduction, in terms of the computation time they require to reach $\norm{\nabla F(\bar{x}_k)}^2 < 10^{-7}$, that is, to reach a stationary point up to numerical precision.
\begin{table}[!ht]
\centering
\caption{Comparison of computation time for variance-reduced algorithms to reach $\norm{\nabla F(\bar{x}_k)}^2 < 10^{-7}$.}
\label{tab:comparison-vr}
\begin{tabular}{cccc}
    \hline
    Algorithm [Ref.] & $\tgrad / \tcomm = 0.1$ & $\tgrad / \tcomm = 1$ & $\tgrad / \tcomm = 10$ \\
    \hline
    GT-SARAH \cite{xin_fast_2022} & $7.57 \times 10^5$ & $6.33 \times 10^6$ & $6.20 \times 10^7$ \\
    GT-SAGA \cite{xin_variance-reduced_2020} & $1.55 \times 10^5$ & $2.21 \times 10^5$ & $8.85 \times 10^5$ \\
    \hline
    \algsaga & $6.04 \times 10^5$ & $5.76 \times 10^6$ & $5.73 \times 10^7$ \\
    \algsagatwo & $3.81 \times 10^4$ & $9.52 \times 10^4$ & $6.66 \times 10^5$ \\
    \hline
\end{tabular}
\end{table}
We see that, depending on the ratio $\tgrad / \tcomm$, their relative speed of convergence changes. When gradient computations are cheaper than communications ($\tgrad / \tcomm = 0.1$), the proposed algorithm \algsaga (and \algsagatwo) outperform both GT-SARAH and GT-SAGA, since the latter do not employ local training. \textit{This testifies to the benefit of employing local training in scenarios where communications are expensive}.
As the ratio $\tgrad / \tcomm$ increases to $1$ and then $10$, we see how \algsaga and GT-SARAH, on the one hand, and \algsagatwo and GT-SAGA, on the other hand, tend to align in terms of performance, as the bulk of the computation time is now due to gradient evaluations, of which the two pairs of algorithms have a similar number (see Table~\ref{tab:time-comparison}). Nonetheless, local training still gives an edge to the proposed algorithms.

The remaining algorithms (\algsgd, LED, K-GT) do not guarantee exact convergence as they do not employ variance reduction. Thus in Table~\ref{tab:comparison-sgd} we report the asymptotic value of $\norm{\nabla F(\bar{x}_k)}^2$ achieved by the different methods.
\begin{table}[!ht]
\centering
\caption{Comparison of algorithms without variance reduction.}
\label{tab:comparison-sgd}
\begin{tabular}{cc}
    \hline
    Algorithm [Ref.] & $\norm{\nabla F(\bar{x}_K)}^2$ \\
    \hline
    LED \cite{alghunaim_local_2023} & $1.29 \times 10^{-3}$ \\
    K-GT \cite{liu_decentralized_2023} & $2.01 \times 10^{-3}$ \\
    \hline
    \algsgd & $1.07 \times 10^{-3}$ \\
    \hline
\end{tabular}
\end{table}
The algorithms have close performance, with the proposed \algsgd outperforming the state of the art slightly, that is, converging closer to a stationary point.
}

\subsection{Tuning the parameters}\label{subsec:numerical-tuning-parameters}
{
In this section we focus on evaluating the impact of the proposed algorithms' tunable parameters.
As discussed in section~\ref{subsec:convergence-rate}, the step-size of \algsgd regulates both the speed of convergence and how close it converges to a stationary point. In Table~\ref{tab:role-step-size} then we apply different step-sizes and evaluate both the asymptotic value of $\norm{\nabla F(\bar{x}_k)}^2$ and the computation time needed for \algsgd to reach such value.
\begin{table}[!ht]
\centering
\caption{Performance of \algsgd for different $\gamma$.}
\label{tab:role-step-size}
\begin{tabular}{ccc}
    \hline
    $\gamma$ & $\norm{\nabla F(\bar{x}_K)}^2$ & Computation time \\
    \hline
    $0.1$ & $6.01 \times 10^{-5}$ & $4.80 \times 10^4$ \\
    $1$ & $5.79 \times 10^{-4}$ & $3.22 \times 10^4$ \\
    $2$ & $1.07 \times 10^{-3}$ & $2.27 \times 10^2$ \\
    $3$ & $1.72 \times 10^{-3}$ & $2.38 \times 10^2$ \\
    $4$ & $2.68 \times 10^{-3}$ & $7.9 \times 10^1$ \\
    $5$ & $4.43 \times 10^{-3}$ & $4.10 \times 10^1$ \\
    \hline
\end{tabular}
\end{table}
As expected, a smaller step-size leads to a smaller asymptotic distance from the stationary point, while a larger step-size improves the speed of convergence. 

We turn now to \algsaga and evaluate its speed of convergence for different numbers of local training epochs $\tau$. Figure~\ref{fig:admm-time} reports the computation time to reach $\norm{\nabla F(\bar{x}_k)}^2 < 10^{-7}$.
\begin{figure}[!ht]
    \centering
    \includegraphics[scale=0.5,trim={0 0.365cm 0 0},clip]{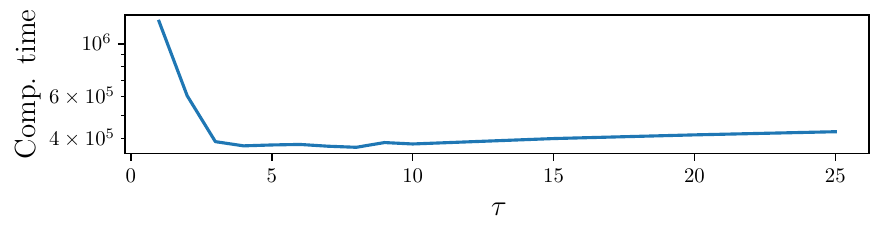}
    \caption{Computation time for \algsaga to reach $\norm{\nabla F(\bar{x}_k)}^2 < 10^{-7}$ for different numbers of local training epochs $\tau$.}
    \label{fig:admm-time}
\end{figure}
Interestingly, it appears that there is a finite optimal value ($\tau = 8$), while smaller and larger values lead to slower convergence.

Finally, as discussed in section~\ref{subsec:uncoordinated}, we can actually choose uncoordinated parameters in \algsaga. In Table~\ref{tab:uncoordinated} then we test the use of different $\tau_i$, $i \in \{ 1, \ldots, N \}$, for different agents.
\begin{table}[!ht]
\centering
\caption{Computation time of \algsaga with uncoordinated numbers of local training epochs.}
\label{tab:uncoordinated}
\begin{tabular}{cc}
    \hline
    $\tau_i$ & Computation time \\
    \hline
    $2$ $\forall i$ & $6.04 \times 10^{5}$ \\
    $\begin{cases}
        2 & i < N/2 \\
        5 & i\geq N/2
    \end{cases}$ & $5.80 \times 10^{5}$ \\
    $5$ $\forall i$ & $3.75 \times 10^{5}$ \\
    \hline
\end{tabular}
\end{table}
In particular, we compare the computation time required to reach $\norm{\nabla F(\bar{x}_k)}^2 < 10^{-7}$ in two coordinated scenarios and an uncoordinated scenario where half of the agents are ``slow'' ($\tau_i = 2$) and half are ``fast'' ($\tau_i = 5$). Interestingly, the algorithm still converges even with uncoordinated parameters, with the presence of ``fast'' agents improving the performance.
}

\section{Concluding Remarks} \label{sec:conclusions}

In this paper, we considered (non)convex distributed learning problems. In particular, to address the challenge of expensive communication, we proposed two communication-efficient algorithms, \algsgd and \algsaga, that use local training. The algorithms employ SGD and SGD with variance reduction, respectively. We have shown that \algsgd converges to a neighborhood of a stationary point, while \algsaga converges exactly. We have thoroughly compared our algorithms with the state of the art, both theoretically and in simulations.
{Future research will focus on analyzing convergence for strongly convex problems and on extending our algorithmic framework to asynchronous scenarios and to the broader class of composite problems, as in~\cite{hong2016convergence}.}

\appendices

\section{Proof sketch of the main theorems}\label{proof:sketch}

\subsection{Proof sketch of Theorem~\ref{th:nonconvex_SGD}}
{

\paragraph*{Step 1 (Lemma~\ref{lem:devitaion_aver})}
 Reformulate the algorithm into a compact linear dynamical system, in which $\mathbf{h}_k$ contains all nonlinearities.
 Decompose the system into average and deviation components via a projection matrix $\widehat{\mathbf{Q}}$. 
 Use graph connectivity to show that the linear part of the deviation system 
 \begin{equation*}
 \widehat{\mathbf{d}}_{k+1}=\mathbf{\Delta} \widehat{\mathbf{d}}_k- \widehat{\mathbf{h}}_{k}
 \end{equation*} 
 is stable ($\|\mathbf{\Delta}\|  = 1 - {\lambda_l\rho \tau \beta} /{2} < 1$) when  $\beta <  \frac{2}{\tau\lambda_u\rho}$ is satisfied.

\paragraph*{Step 2 (Lemmas~\ref{lem:phi_k},~\ref{lemma:d_k})}
 Bound the error from local training steps: the deviation of local states $\boldsymbol{\Phi}_k^t$ from the global average $\mathbf{\bar{X}}_k$  
 as in \eqref{phi_sgd}.
\begin{align*}
\mathbb{E} \left[ \|\widehat{\mathbf{\Phi}}_k\|^2 \right] &\leq 
\left( \frac{72\beta \tau^2}{\lambda_l\rho}  + 144\tau^3\beta^2  \right) \mathbb{E} [  \|\widehat{\mathbf{d}}_{k}\|^2]  + 4N\tau^2\gamma^2 \sigma^2
\\& + 16\tau^3\gamma^2 \mathbb{E} [  \| \nabla F(\bar{x}_k) \|^2 ].
\end{align*}
 Incorporate this bound into the perturbation term $\widehat{\mathbf{h}}_k$ to derive a recursive inequality for the deviation system as in \eqref{d_k},
 \begin{align*}
&\mathbb{E} [ \|\widehat{\mathbf{d}}_{k+1}\|^2 ] 
\\&\leq ( \delta + \frac{c_0}{1-\delta}) \mathbb{E} [ \|\widehat{\mathbf{d}}_{k}\|^2 ] 
+ \frac{c_1}{1-\delta}  \mathbb{E}[ \| \sum_t \overline{\nabla F}(\boldsymbol{\Phi}^t_k) \|^2 ]
\\& + \frac{c_2}{1-\delta}\mathbb{E}[\| \nabla F(\bar{x}_k) \|^2] + \frac{c_3}{1-\delta} \sigma^2,
\end{align*} 
    where sufficiently small $\gamma$ ensuring stability.


\paragraph*{Step 3 (Theorem~\ref{th:nonconvex_SGD})}
Apply the smoothness inequality to the averaged iterate 
\begin{equation*}
\bar{x}_{k+1} - x^* = \bar{x}_{k}  - x^*   -\frac{\gamma}{N} \sum_{t=0}^{\tau -1} \sum_{i=1}^{N} g_i(\phi_{i,k}^{t}), 
\end{equation*}
show a descent in the global objective $ F\left(\bar{x}_{k}\right)$.
    \begin{align*}
 & \mathbb{E} [ F\left(\bar{x}_{k+1}\right) ] 
 \leq   \mathbb{E} [ F\left(\bar{x}_{k}\right) ]-\frac{\gamma \tau}{2} \mathbb{E} [ \left\|\nabla F\left(\bar{x}_{k}\right)\right\|^2 ]
 \\& -\frac{\gamma}{2}(1 - 2 \gamma \tau L) \sum_t \mathbb{E} [ \|\frac{1}{N} \sum_i \nabla f_i\left(\phi_{i, k}^t\right) \|^2 ] 
 \\& +\frac{\gamma L^2}{2 N} \mathbb{E} [ \|\widehat{\mathbf{\Phi}}_k\|^2 ] + { \gamma^2 \tau^2 L \sigma^2}.
\end{align*}
 Sum over iterations, pick an appropriate step-size, and use the bounds obtained in Lemmas~\ref{lem:phi_k},~\ref{lemma:d_k}, leading to the convergence result.

}

\subsection{Proof sketch of Theorem~\ref{th:nonconvex_SGD_VR}}
{
The proof of Theorem~\ref{th:nonconvex_SGD_VR} follows a structure similar to that of Theorem~\ref{th:nonconvex_SGD}. The key distinction lies in the treatment of the gradient variance. We bound the gradient variance with $t_k$, 
\begin{align*}
  \mathbb{E} [ \sum_i \sum_t \|   g_i\left(\phi_{i, k}^t\right) -   \nabla f_i(\phi_{i, k}^t) \|^2  ] \leq 2L^2 \|\widehat{\mathbf{\Phi}}_k \|^2 + 2 L^2 \mathbb{E}[{t}_k], 
  \end{align*}
  using $ \mathbb{E} [ \|a- \mathbb{E}[a] \|^2 ] \leq \mathbb{E} [ \| a \|^2 ]$ with $ a = \nabla f_{i, h}( \phi_{i, k}^t )-\nabla f_{i, h}(r_{i, h, k}^{t}) $.
  And show that $t_k$ can be bound by the deviation bound $\| \widehat{\mathbf{d}}_k\|$ and the global gradient at the average state $\nabla F(\bar{x}_k)$ as in Lemma~\ref{lemma:tk}. And we further bound the deviation bound $\| \widehat{\mathbf{d}}_k\|$ with $t_k$ as in Lemma~\ref{lem:vr_d_k}. As a result, the final convergence expression no longer depends on a constant $\sigma$, and we can have exact convergence. 
}

\section{Preliminary Analysis} \label{preliminary}
In this section we summarize the step-size bounds, and present preliminary results underpinning Theorems \ref{th:nonconvex_SGD} and \ref{th:nonconvex_SGD_VR}.

\subsection{Step-size bounds}\label{sec:preliminary_definition}
The step-size upper bounds for \algsgd and \algsaga are, respectively:
\begin{equation} \label{gamma_sgd}
	\bar\gamma_{\text{sgd}}\coloneqq \min _{i=1, 2,  \ldots, 6} \bar\gamma_i, 
 \end{equation}
 \begin{equation} \label{gamma_saga}
 \bar\gamma_{\text{vr}}\coloneqq \min _{i=1, 7, 8, \ldots, 15} \bar\gamma_i,
\end{equation}
where:
\begin{align*}
 \bar{\gamma}_1&\coloneqq  \min \left\lbrace  1, \frac{1}{L\tau 2\sqrt{2}}
  \right\rbrace, \ \bar{\gamma}_2 \coloneqq \frac{\sqrt{3}}{8L\tau}, 
	\\
 \bar{\gamma}_3&\coloneqq {\frac{3}{8 L \tau}}, 
 \quad 
 \bar{\gamma}_4 \coloneqq \frac{ \lambda_l}{\lambda_u L\sqrt{16( 1+ 2\rho^2 \|\Tilde{\mathbf{L}} \|^2)  \tau  \|\widehat{\mathbf{V}}^{-1} \|^2 \beta_0}  }, 
  \\  \bar{\gamma}_5&\coloneqq  \frac{ \sqrt{\lambda_l } }{4 \tau \sqrt{\lambda_u L} \sqrt[4]{c_4 ( 1+ 2\rho^2 \|\Tilde{\mathbf{L}} \|^2)  2  \|\widehat{\mathbf{V}}^{-1} \|^2 } },
  \\\bar{\gamma}_6 & \coloneqq  \frac{ \sqrt{ \lambda_l\beta } }{2 \sqrt{\tau \lambda_uL} \sqrt[4]{c_4 6  N\|\widehat{\mathbf{V}}^{-1} \|^2} }, 
  \\
  \bar{\gamma}_7 &\coloneqq  \frac{1}{4\tau L \sqrt{3}},
\bar{\gamma}_8 \coloneqq   \sqrt{\frac{m_l}{32L^2m_u}},  \bar{\gamma}_9 \coloneqq \sqrt{\frac{1}{12L^2}},
\\\bar{\gamma}_{10} &\coloneqq \sqrt{\frac{m_l}{512 m_u \tau^3 L^2}}, 
\bar{\gamma}_{11} \coloneqq \sqrt{ \frac{3\tau}{8  \kappa_3} }, \bar{\gamma}_{12} \coloneqq \frac{3}{8\tau L} ,
\\ \bar{\gamma}_{13} & \coloneqq \frac{\lambda_l}{\lambda_u \sqrt{8(\kappa_0  \tilde{\beta}_0 + 2(\tilde{s}_0 +\tilde{s}_1) (\kappa_1+ 32 \tau^2 L^2 \kappa_0))}},
\\  \bar{\gamma}_{14} & \coloneqq \sqrt[3]{ \frac{\lambda_l^2 \beta^2}{768L^2N\lambda_u^2 \kappa_4}},  \quad
  \bar{\gamma}_{15}  \coloneqq  \sqrt[3]{\frac{\lambda_l^2 \tau}{128\lambda_u^2 \kappa_4\kappa_2 } } 
\end{align*}
These bounds depend on the following quantities. $d_u = \max \{ |\mathcal{N}_i| \}_{{i \in \mathcal{V}} }$ denotes the maximum agents' degree.
$\widehat{\mathbf{V}}$ is defined in \eqref{eq: v_hat}. 
 $\lambda_u$ is the largest eigenvalue of the graph $\mathcal{G}$'s Laplacian matrix,  $\lambda_l$ is the smallest nonzero eigenvalue of graph $\mathcal{G}$'s Laplacian matrix. 
We denote $m_u=\max_{i=1,...,N} m_i$ and $m_l=\min_{i=1,...,N} m_i$, where $m_i$ is the number of local data points of agent $i$.
Additionally, we have the following definitions, used both in the upper bound above and throughout the convergence analysis:
\begin{align*}
\beta_0 & \coloneqq  \frac{72\beta \tau^2}{\lambda_l\rho}  + 144\tau^3\beta^2,
\\ \tilde{\beta}_0 & \coloneqq \frac{72\beta \tau^2}{\lambda_l\rho}  + 144\tau^3\beta^2,
\\c_4 &\coloneqq  \frac{ 4 L^2}{ N}  \left( \frac{72\beta \tau}{\lambda_l\rho}  + 144\tau^2\beta^2  \right) 
\\\kappa_0&  \coloneqq   ( 1+ 2\rho^2 \|\Tilde{\mathbf{L}} \|^2) 6\tau L^2 \|\widehat{\mathbf{V}}^{-1} \|^2 +  \frac{6 L^2}{\beta^2} 2\tau L^2\|\widehat{\mathbf{V}}^{-1} \|^2, 
\\\kappa_1 &  \coloneqq  ( 1+ 2\rho^2 \|\Tilde{\mathbf{L}} \|^2) 4\tau L^2 \|\widehat{\mathbf{V}}^{-1} \|^2 +\frac{ 6 L^2 }{\beta^2}  2 \tau L^2\|\widehat{\mathbf{V}}^{-1} \|^2, 
\\
 \kappa_2& \coloneqq  16\tau^3N \kappa_0 + 2\tilde{s}_2(\kappa_1+ 32 \tau^2  L^2 \kappa_0),
 \\  \tilde{s}_0 &\coloneqq \frac{36\beta \tau^2 m_u }{\lambda_l\rho} +\frac{144 \tau^2 m_u}{m_l}\beta^2,
\\ \tilde{s}_1 & \coloneqq \left( \frac{72\beta \tau^2}{\lambda_l\rho}  +144\tau^3\beta^2 \right) \frac{8m_u \tau}{m_l},
\\ \tilde{s}_2 & \coloneqq \frac{16N  m_u \tau^2}{m_l} + \frac{8m_u \tau}{m_l}16 \tau^3 N,
\\ \kappa_3 &\coloneqq 16\tau^3N (\frac{L^2}{2 N}+ 2 \tau L^3) \\&+ 2\tilde{s}_2(2 \tau L^3+ 32 \tau^2  L^2 (\frac{L^2}{2 N}+ 2\tau L^3)),
\\\kappa_4 & \coloneqq (\frac{ L^2}{2 N}+ 2 \tau L^3)  \tilde{\beta}_0 \\&+ 2(\tilde{s}_0 +\tilde{s}_1) (2\tau L^3+ 32 \tau^2  L^2 (\frac{ L^2}{2 N}+ 2 \tau L^3)),
\end{align*}
where $ \frac{1}{\tau\lambda_u\rho} \leq \beta <  \frac{2}{\tau\lambda_u\rho}$.
\subsection{Preliminary transformation}
We start by rewriting the algorithm in a compact form. To this end, we introduce the following auxiliary variables: $\mathbf{Z} = \operatorname{col}\{z_{ij}\}_{i,j \in \mathcal{E}} 
    $, $ \mathbf{\Phi}_k^t = \operatorname{col}\{\phi^t_{1,k}, \phi_{2,k}^t, ..., \phi_{N,k}^t\}
   $, $ G(\Phi_k^t) = \operatorname{col}\{ g_1(\phi_{1,k}^t), g_2(\phi_{2,k}^t),..., g_N(\phi_{N,k}^t)\}
    $, $ \mathrm{F}(\mathbf{X}) = \operatorname{col}\{ f_1(x_1),  f_2(x_2),...,  f_N(x_N)\}
    $, $ F({x}) = \frac{1}{N}\sum_{i=1}^N f_i({x})$.
Define 
$
\mathbf{A}= \operatorname{blk\,diag}\{ \mathbf{1}_{d_i} \}_{i \in \mathcal{V}} \otimes \mathbf{I}_n \in \mathbb{R}^{Mn \times Nn},
$
where $d_i = |\mathcal{N}_i|$ is the degree of node $i$, and $M = \sum_i |\mathcal{N}_i|$.
$\mathbf{P} \in \mathbb{R}^{Mn \times Mn}$ is a permutation matrix that swaps $e_{ij}$  with $e_{ji}$. If there is an edge between nodes $i$, $j$, then $A^T[i,:]PA[:,j] = 1$, otherwise $A^T[i,:]PA[:,j] = 0$.
 Therefore $ \mathbf{A}^T\mathbf{P}\mathbf{A} = \Tilde{\mathbf{A}}$ is the adjacency matrix. 

The compact form of {\algsgd} and {\algsaga} then is:
\begin{subequations}\label{eq:compact-admm}
\begin{equation}
        \mathbf{X}_{k+1} = \mathbf{X}_{k} -\sum_{t=0}^{\tau -1}( \gamma   G(\mathbf{\Phi}_k^t) + {\beta}(\rho \mathbf{A}^T\mathbf{A}\mathbf{X}_k - \mathbf{A}^T \mathbf{Z}_k ) ) \label{eq:compact-admm-x}
\end{equation}
\begin{equation}
    \mathbf{Z}_{k+1} =  \frac{1}{2}\mathbf{Z}_{k} - \frac{1}{2} \mathbf{P}\mathbf{Z}_k+ \rho \mathbf{P}\mathbf{A}\mathbf{X}_{k+1}. \label{eq:compact-admm-z}
\end{equation}
\end{subequations}
Moreover, we introduce the following useful variables
\begin{equation} \label{y_tilde_y}
\begin{aligned}
&\mathbf{Y}_k= \mathbf{A}^T \mathbf{Z}_{k} - \frac{\gamma}{\beta} \nabla F(\mathbf{\bar{X}}_k) -\rho \mathbf{D} \mathbf{X}_k
\\&
\Tilde{\mathbf{Y}}_k= \mathbf{A}^T \mathbf{P}\mathbf{Z}_{k} +  \frac{\gamma}{\beta} \nabla \mathrm{F}(\bar{\mathbf{X}}_k) - \rho \mathbf{D} \mathbf{X}_k,
\end{aligned}
\end{equation}
where $\bar{\mathbf{X}}_k = \mathbf{1}_N \otimes \bar{x}_k$, with $\bar{x}_k = \frac{1}{N} \mathbf{1}^T \mathbf{X}_k$, and $\mathbf{D} = \mathbf{A}^T\mathbf{A} = \operatorname{diag}\{ d_i \mathbf{I}_n \}_{i \in \mathcal{V}}$ is the degree matrix.

Multiplying both sides of \eqref{eq:compact-admm-z} by $\mathbf{1}^T$, and using the initial condition, we obtain $\mathbf{1}^T\mathbf{A}^T\mathbf{Z}_{k+1} = \rho \mathbf{1}^T \mathbf{D} \mathbf{X}_{k+1}$ for all $k \in \N$.
As a consequence $\bar{\mathbf{Y}}_k =  \frac{\gamma}{\beta} \mathbf{1} \otimes \frac{1}{N}  \mathbf{1}^T\nabla \mathrm{F}(\bar{\mathbf{X}}_k) =  \frac{\gamma}{\beta} \mathbf{1} \otimes \frac{1}{N} \sum_{i}\nabla f_i(\bar{x}_k)$, and \eqref{eq:compact-admm} can be further rewritten as
\begin{equation}\label{eq:compact-admm-2}
\begin{aligned}
   & \begin{bmatrix}
     \mathbf{X}_{k+1}\\
    \mathbf{Y}_{k+1}\\
   \Tilde{\mathbf{Y}}_{k+1}
\end{bmatrix} = \begin{bmatrix}
    \mathbf{I}  &  \beta \tau \mathbf{I} & \mathbf{0}  \\
     \rho \Tilde{\mathbf{L}}  &  \rho \Tilde{\mathbf{L}}\beta \tau +  \frac{1}{2} \mathbf{I}  & - \frac{1}{2}\mathbf{I}  \\
   \mathbf{0}  &   - \frac{1}{2}\mathbf{I} &  \frac{1}{2} \mathbf{I}
\end{bmatrix} \otimes \mathbf{I}_n \begin{bmatrix}
     \mathbf{X}_{k}\\
    \mathbf{Y}_{k}\\
   \Tilde{\mathbf{Y}}_{k}
\end{bmatrix} 
-   \mathbf{h}_k, 
\end{aligned}
\end{equation}
where 
\begin{equation} \label{eq: matrix_L}
\Tilde{\mathbf{L}} = \tilde{\mathbf{A}}- \mathbf{D}
\end{equation}
and $$
\begin{aligned}
\mathbf{h}_k &= 
\large[ \gamma  \sum_{t=0}^{\tau -1}( \nabla G(\mathbf{\Phi}_k^t) -  \nabla \mathrm{F}(\bar{\mathbf{X}}_k) ) ; \\&
\gamma  \rho  \Tilde{\mathbf{L}}\sum_{t=0}^{\tau -1}( \nabla G(\mathbf{\Phi}_k^t) -  \nabla \mathrm{F}(\bar{\mathbf{X}}_k)   ) + \frac{\gamma}{\beta} ( \nabla F(\bar{\mathbf{X}}_{k+1}) -  \nabla F(\bar{\mathbf{X}}_{k}) ) ;\\& \frac{\gamma}{\beta} ( -\nabla F(\bar{\mathbf{X}}_{k+1}) +  \nabla F(\bar{\mathbf{X}}_{k}) ) \large].
\end{aligned}$$
We remark that~\eqref{eq:compact-admm-2} can be interpreted as a linear dynamical system, with the non-linearity of the gradients as input in $\mathbf{h}_k$.

\subsection{Deviation from the average} \label{sec:devitaion_aver}
The following lemma illustrates how far the states deviate from the average and will be used later in the proofs of Lemmas~\ref{lem:phi_k} and~\ref{lem:vr_d_k}.
\begin{lemma} \label{lem:devitaion_aver}
 Let Assumption~\ref{as:graph} hold, when  $\beta <  \frac{2}{\tau\lambda_u\rho}$,   
 \begin{equation} \label{X_Y_d} 
\|  \bar{\mathbf{X}}_k- \mathbf{X}_k \|^2 \leq \frac{18\beta \tau}{\lambda_l\rho} \|  \widehat{\mathbf{d}}_k\|^2, \quad \|  \bar{\mathbf{Y}}_k- \mathbf{Y}_k \|^2 \leq 9 \|  \widehat{\mathbf{d}}_k \|^2,
\end{equation}
and 
\begin{equation} \label{d_k_0}
\|\widehat{\mathbf{d}}_{k+1}\|^2  \leq \delta \|\widehat{\mathbf{d}}_{k}\|^2 +
\frac{1}{1-\delta} \|\mathbf{\widehat{h}}_{k}\|^2
\end{equation}
where $\delta = 1 - {\lambda_l\rho \tau \beta}/{2} <1$,
{$
\widehat{\mathbf{d}}_k = \widehat{\mathbf{V}}^{-1}
\begin{bmatrix}
\widehat{\mathbf{Q}}^T  \mathbf{X}_{k};
   \widehat{\mathbf{Q}}^T \mathbf{Y}_{k};
   \widehat{\mathbf{Q}}^T \Tilde{\mathbf{Y}}_{k}
\end{bmatrix}
$, $\widehat{\mathbf{Q}}$ and $\widehat{\mathbf{V}}^{-1}$ are matrices used to define the deviation term $\widehat{\mathbf{d}}_k$. }
\end{lemma}
\begin{proof}
By Assumption~\ref{as:graph}, graph $\mathcal{G}$ is undirected and connected, hence its Laplacian $ -\Tilde{\mathbf{L}}$ is  symmetric; moreover, it has one zero eigenvalue with eigenvector $\boldsymbol{1}$, with all eigenvalues being positive.
Denote by $\widehat{\mathbf{Q}} \in \mathbf{R}^{N \times (N-1)}$ the matrix satisfying $\widehat{\mathbf{Q}} \widehat{\mathbf{Q}} ^T=\mathbf{I}_N-\frac{1}{N} \mathbf{1 1}{ }^T$,  $\widehat{\mathbf{Q}} ^T \widehat{\mathbf{Q}} =\mathbf{I}_{N-1}$  and $\mathbf{1}^T \widehat{\mathbf{Q}} =0$, $\widehat{\mathbf{Q}} ^T \mathbf{1}=0$.
We have that
\begin{equation}\label{Q}
\widehat{\mathbf{Q}} ^T \Tilde{\mathbf{L}} = \widehat{\mathbf{Q}} ^T \Tilde{\mathbf{L}}(\mathbf{I}_N-\frac{1}{N} \mathbf{1 1}{ }^T) = \widehat{\mathbf{Q}}^T \Tilde{\mathbf{L}}\widehat{\mathbf{Q}} \widehat{\mathbf{Q}} ^T.
\end{equation}
Additionally, it holds that
$\|\widehat{\mathbf{Q}} ^T \mathbf{X}_k\|^2= \mathbf{X}_k^T \widehat{\mathbf{Q}}   \widehat{\mathbf{Q}}^T \widehat{\mathbf{Q}} \widehat{\mathbf{Q}}^T \mathbf{X}_k= \left\| \widehat{\mathbf{Q}}  \widehat{\mathbf{Q}}^T \mathbf{X}_k\right\|^2=\left\|\mathbf{X}_k-\bar{{\mathbf{X}}}_k\right\|^2$, and $\|\widehat{\mathbf{Q}}\|=1$.
Multiplying both sides of \eqref{eq:compact-admm-2} by $\widehat{\mathbf{Q}}^T $ and using \eqref{Q} yields:
\begin{equation} \label{Q_2}
\begin{aligned}
    \begin{bmatrix}
   \widehat{\mathbf{Q}}^T  \mathbf{X}_{k+1}\\
   \widehat{\mathbf{Q}}^T \mathbf{Y}_{k+1}\\
   \widehat{\mathbf{Q}}^T \Tilde{\mathbf{Y}}_{k+1}
\end{bmatrix}& =  (\mathbf{\Theta} \otimes \mathbf{I}_n) \begin{bmatrix}
   \widehat{\mathbf{Q}}^T  \mathbf{X}_{k}\\
   \widehat{\mathbf{Q}}^T \mathbf{Y}_{k}\\
   \widehat{\mathbf{Q}}^T \Tilde{\mathbf{Y}}_{k}
\end{bmatrix} -  \widehat{\mathbf{Q}}^T  \mathbf{h}_k
\end{aligned}
\end{equation}
where  $\mathbf{\Theta} =  \begin{bmatrix}
    \mathbf{I}  &  \beta \tau \mathbf{I} & \mathbf{0}  \\
 \rho   \widehat{\mathbf{Q}}^T  \Tilde{\mathbf{L}} \widehat{\mathbf{Q}}   &   \rho  \widehat{\mathbf{Q}}^T  \Tilde{\mathbf{L}}  \widehat{\mathbf{Q}}   \beta \tau \mathbf{I} + \frac{1}{2}\mathbf{I} & -\frac{1}{2}\mathbf{I}  \\
  \mathbf{ 0 }&   \frac{1}{2}\mathbf{I} & \frac{1}{2}\mathbf{I}
\end{bmatrix}$.

The next step is to show that $\widehat{\mathbf{Q}}^T \Tilde{\mathbf{L}} \widehat{\mathbf{Q}}$ is negative definite by contradiction.
Let $x \in R^{N-1}$ be an arbitrary vector, since $ - \Tilde{\mathbf{L}} $ is the positive semi-definite Laplacian matrix,  the quadratic form $x^T\widehat{\mathbf{Q}}^T \Tilde{\mathbf{L}} \widehat{\mathbf{Q}}x = (\widehat{\mathbf{Q}}x )^T \Tilde{\mathbf{L}} \widehat{\mathbf{Q}}x \leq 0$. Moreover, if $ (\widehat{\mathbf{Q}}x )^T  ( \Tilde{\mathbf{A}} -\mathbf{D})\widehat{\mathbf{Q}}x =  0$, we have $\widehat{\mathbf{Q}}x = \boldsymbol{1}$.
Now, the properties of $\widehat{\mathbf{Q}}$ imply that $\widehat{\mathbf{Q}}^T\widehat{\mathbf{Q}}x = x=\widehat{\mathbf{Q}}^T \boldsymbol{1}=0$. Therefore, for all non-zero vectors $x$, the quadratic form $x^T\widehat{\mathbf{Q}}^T  \Tilde{\mathbf{L}} \widehat{\mathbf{Q}}x<0$, thus $\widehat{\mathbf{Q}}^T  \Tilde{\mathbf{L}} \widehat{\mathbf{Q}}$ is a symmetric negative-definite matrix.

We proceed now to diagonalize each block of $\mathbf{\Theta}$ with $\boldsymbol{\phi} \in \R^{(N-1)\times (N-1)}$:
$$
\begin{aligned}
\tilde{\mathbf{\Theta}} &= \boldsymbol{\phi} \mathbf{\Theta} \boldsymbol{\phi}^T=
\begin{bmatrix}
\phi & 0 & 0 \\
0 & \phi & 0 \\
0 & 0 & \phi \\
\end{bmatrix} \mathbf{\Theta} \begin{bmatrix}
\phi^T & 0 & 0 \\
0 & \phi^T & 0 \\
0 & 0 & \phi^T \\
\end{bmatrix} 
\\&=  \begin{bmatrix}
    \mathbf{I}  &  \beta \tau & 0  \\
 \rho  \phi \widehat{\mathbf{Q}}^T  \Tilde{\mathbf{L}} \widehat{\mathbf{Q}}\phi^T  &   \rho \phi\widehat{\mathbf{Q}}^T \Tilde{\mathbf{L}} \widehat{\mathbf{Q}} \phi^T   \beta \tau + \frac{1}{2}\mathbf{I} & -\frac{1}{2}\mathbf{I}  \\
   0 &   -\frac{1}{2}\mathbf{I} & \frac{1}{2}\mathbf{I}
\end{bmatrix}.
\end{aligned}
$$
We denote $ \phi\widehat{\mathbf{Q}}^T  \Tilde{\mathbf{L}} \widehat{\mathbf{Q}}\phi^T  = \operatorname{diag}\{\tilde{\lambda}_i\}_{i=2,...,N}$, where $\tilde{\lambda}_i<0$ is the eigenvalue of $  \widehat{\mathbf{Q}}^T  \Tilde{\mathbf{L}} \widehat{\mathbf{Q}}$,  $\tilde{\lambda}_{\min} = \lambda_{\min}( \widehat{\mathbf{Q}}^T  \Tilde{\mathbf{L}} \widehat{\mathbf{Q}} ) $, and $\tilde{\lambda}_{\max} = \lambda_{\max}( \widehat{\mathbf{Q}}^T  \Tilde{\mathbf{L}} \widehat{\mathbf{Q}} ) $, note that $|\tilde{\lambda}_{\max}|$ and  $|\tilde{\lambda}_{\min}|$ are the smallest nonzero eigenvalue and the largest eigenvalue of the Laplacian matrix of the graph $\mathcal{G}$, respectively. In the following, we denote $ \lambda_l = |\tilde{\lambda}_{\max}|$ and $\lambda_u = |\tilde{\lambda}_{\min}|$.
Since each block of $\tilde{\mathbf{\Theta}}$ is a diagonal matrix, there exists a permutation matrix $\mathbf{P}_0$ such that $\mathbf{P}_0 \tilde{\mathbf{\Theta}} \mathbf{P}_0^T= \mathbf{P}_0 \boldsymbol{\phi} \mathbf{\Theta} \boldsymbol{\phi}^T \mathbf{P}_0^T=
 \operatorname{blkdiag}\left\{ \mathbf{D}_i\right\}_{i=2}^N,$ where
\begin{equation}
 \mathbf{D}_i= 
 \begin{bmatrix}
    1  &  \beta \tau & 0  \\
\rho \tilde{\lambda}_i  &   \rho \tilde{\lambda}_i  \beta \tau + 0.5 & -0.5  \\
   0 &   -0.5 & 0.5
\end{bmatrix}.
\end{equation}
We diagonalize $\mathbf{D}_i=\mathbf{V}_i \mathbf{\Delta}_i \mathbf{V}_i^{-1}$, where $\mathbf{\Delta}_i$ is the diagonal matrix of $\mathbf{D}_i$'s eigenvalues, and  
\begin{equation}
 \mathbf{V}_i= 
 \begin{bmatrix}
   -\beta \tau& d_{12}& d_{13} \\
  1&   d_{22}& d_{23}  \\
   1 &1 & 1
\end{bmatrix}
\end{equation} with
$d_{12}= -\beta\tau  + ((\beta\tilde{\lambda_i}\rho\tau(\beta\tilde{\lambda_i}\rho\tau + 2))^{0.5})/(\tilde{\lambda_i}\rho)$, $d_{13} = -\beta\tau  - ((\beta\tilde{\lambda_i}\rho\tau(\beta\tilde{\lambda_i}\rho\tau + 2))^{0.5} )/(\tilde{\lambda_i}\rho)$, $d_{22} =\tilde{\lambda_i}\rho d_{12} -1$, $d_{23}= \tilde{\lambda_i}\rho d_{13} -1$.
The nonzero eigenvalues $\lambda$ of $\mathbf{D}_i$, $i=2, \ldots, N$, satisfy 
$
2 \lambda^2 + (-2 \tilde{\lambda}_i \rho \tau \beta -4) \lambda + \tilde{\lambda}_i \rho \tau \beta + 2 =0,
$
which can be written in the form:
\begin{equation} \label{root}
2\lambda^2 -2t\lambda +t =0
\end{equation}
where $t =\tilde{\lambda}_i \rho \tau \beta + 2 $.
The modulus of the roots of \eqref{root} is $1- \frac{|\tilde{\lambda}|\rho \tau \beta}{2}$ when $-2 < \tilde{\lambda} \rho \tau \beta <0$.
We conclude that we can write $\mathbf{\Theta}=(\mathbf{P}_0 \boldsymbol{\phi})^T \mathbf{V} \boldsymbol{\Delta} \mathbf{V}^{-1} (\mathbf{P}_0 \boldsymbol{\phi})$ where $\mathbf{V}=\operatorname{blkdiag}\left\{V_i\right\}_{i=2}^N $ and 
\begin{equation} \label{eq: Delta}
\boldsymbol{\Delta}=\operatorname{blkdiag}\left\{\mathbf{\Delta}_i\right\}_{i=2}^N.
\end{equation}
Moreover, $\|\mathbf{\Delta}\| = 1 - {\lambda_l\rho \tau \beta} /{2}$
when
\begin{equation} \label{eq:gamma_d_k}
\lambda_u\rho \tau \beta <2.
\end{equation}
Then, left multiplying both sides of \eqref{Q_2} by the inverse of
\begin{equation} \label{eq: v_hat}
\widehat{\mathbf{V}}=(P_0 \boldsymbol{\phi})^T\mathbf{V},
\end{equation}
which is given by $\widehat{\mathbf{V}}^{-1}=\mathbf{V}^{-1}(\mathbf{P}_0 \boldsymbol{\phi})$, yields
\begin{equation}\label{eq:delta-hat}
\widehat{\mathbf{d}}_{k+1}=\mathbf{\Delta} \widehat{\mathbf{d}}_k- \widehat{\mathbf{h}}_{k},
\end{equation}
where $
\widehat{\mathbf{d}}_k = \widehat{\mathbf{V}}^{-1}
\begin{bmatrix}
\widehat{\mathbf{Q}}^T  \mathbf{X}_{k};
   \widehat{\mathbf{Q}}^T \mathbf{Y}_{k};
   \widehat{\mathbf{Q}}^T \Tilde{\mathbf{Y}}_{k}
\end{bmatrix}
$,
$
\widehat{\mathbf{h}}_{k} =  \widehat{\mathbf{V}}^{-1} \widehat{\mathbf{Q}}^T  \mathbf{h}_k
$,
and
$
\begin{bmatrix}
\widehat{\mathbf{Q}}^T  \mathbf{X}_{k};
   \widehat{\mathbf{Q}}^T \mathbf{Y}_{k};
   \widehat{\mathbf{Q}}^T \Tilde{\mathbf{Y}}_{k}
\end{bmatrix}=\widehat{\mathbf{V}}\widehat{\mathbf{d}}_k = \boldsymbol{\phi}^T \mathbf{P}_0^T\mathbf{V}\widehat{\mathbf{d}}_k = \boldsymbol{\phi}^T \mathbf{P}_0^T\mathbf{V} \mathbf{P}_0\mathbf{P}_0^T \widehat{\mathbf{d}}_k
$.
As a consequence, from~\eqref{Q_2} it holds that
$$
\begin{aligned}
&\begin{bmatrix}
\widehat{\mathbf{Q}}^T  \mathbf{X}_{k}\\
   \widehat{\mathbf{Q}}^T \mathbf{Y}_{k}\\
   \widehat{\mathbf{Q}}^T \Tilde{\mathbf{Y}}_{k}.
\end{bmatrix}= \boldsymbol{\phi}^T \left[\begin{array}{ccc}
 -\beta \tau \mathbf{I} & d_{12}\mathbf{I} & d_{13}\mathbf{I}  \\
  \mathbf{I}&   d_{22}\mathbf{I}& d_{23} \mathbf{I} \\
   \mathbf{I} &\mathbf{I} & \mathbf{I}
\end{array}\right] \mathbf{P}_0^T  \widehat{\mathbf{d}}_k \\& = \boldsymbol{\phi}^T \left[\begin{array}{c}
 -\beta \tau \mathbf{I}  \mathbf{P}_0^T[1] + d_{12}   \mathbf{P}_0^T[2] +  d_{13}  \mathbf{P}_0^T[3]  \\
   \mathbf{P}_0^T[1] +   d_{22}  \mathbf{P}_0^T[2]+ d_{23}  \mathbf{P}_0^T[3]  \\
    \mathbf{P}_0^T[1] + \mathbf{P}_0^T[2] + \mathbf{P}_0^T[3]
\end{array}\right]   \widehat{\mathbf{d}}_k,
\end{aligned}
$$ where $\mathbf{P}_0^T[1], \mathbf{P}_0^T[2], \mathbf{P}_0^T[3]$ are the top, middle and bottom blocks of $\mathbf{P}_0^T$ respectively.
Moreover, we have  $ | d_{12} |^2 = |d_{13}|^2 \leq \frac{2\beta \tau}{\lambda_l\rho} $, $|d_{22}| = |d_{23}| =1$.
Now, if we let 
$
\beta \tau \leq \frac{2}{\lambda_l\rho},
$
and using $\|\phi\| =1$, $\|\mathbf{P}_0^T[i]\| =1$, $i = 1,2,3$, we derive that
\begin{align*}
&\| \bar{\mathbf{X}}_k- \mathbf{X}_k \|^2 = \|  \widehat{\mathbf{Q}}^T  \mathbf{X}_{k} \|^2
\\& \quad
= \| \phi^T (-\beta \tau \mathbf{I}  \mathbf{P}_0^T[1] + d_{12}   \mathbf{P}_0^T[2] +  d_{13}  \mathbf{P}_0^T[3])  \widehat{\mathbf{d}}_k  \|^2
\\& \quad \leq 3 (\beta^2 \tau^2 +  \frac{4\beta \tau}{\lambda_l\rho}  )\|  \widehat{\mathbf{d}}_k \|^2 
\leq \frac{18\beta \tau}{\lambda_l\rho} \|  \widehat{\mathbf{d}}_k\|^2.
\end{align*}
Applying the same manipulations to $\|  \bar{\mathbf{Y}}_k- \mathbf{Y}_k \|^2$, we obtain \eqref{X_Y_d} holds.
Denote now
$
\|\widehat{\mathbf{\Phi}}_k\|^2=\sum_{i=1}^N \sum_{t=0}^{\tau-1}\left\|\phi_{i, k}^t-\bar{x}_k\right\|^2 =\sum_{t=0}^{\tau-1}\left\|\Phi_{k}^t-\bar{X}_k\right\|^2. 
$
Using Assumption~\ref{as:local-costs} we derive that
\begin{align*}
&\| \sum_{t=0}^{\tau -1}( G(\mathrm{\Phi}_k^t) -  \nabla \mathrm{F}(\bar{\mathbf{X}}_k)   )\|^2
\\&
\leq 2\tau L^2 \|\widehat{\mathbf{\Phi}}_k \|^2
 +   2\tau \sum_{t=0}^{\tau -1} \| G(\mathrm{\Phi}_k^t)  - \nabla F(\mathrm{\Phi}_k^t)   \|^2.
\end{align*}
Denote $\overline{G}(\mathrm{\Phi}_k^t)=\frac{1}{N}  \sum_{i=1}^{N} g_i (\phi_{i, k}^t)$ and $\overline{\nabla \mathrm{F}}(\mathrm{\Phi}_k^t)=\frac{1}{N}  \sum_{i=1}^{N} \nabla f_i (\phi_{i, k}^t)$, we have
\begin{equation} \label{G(K)}
\begin{aligned}
&\|\sum_{t=0}^{\tau -1} \overline{G}(\mathrm{\Phi}_k^t) \|^2 
\\&= \|\frac{1}{N} \sum_i \sum_t (\nabla f_i\left(\phi_{i, k}^t\right)  +  g_i\left(\phi_{i, k}^t\right) -   \nabla f_i\left(\phi_{i, k}^t\right) ) \|^2 
\\&  \leq 2\|\sum_{t=0}^{\tau -1} \overline{\nabla \mathrm{F}}(\mathrm{\Phi}_k^t) \|^2  +  \frac{2 \tau}{N} \sum_i \sum_t\|   g_i\left(\phi_{i, k}^t\right) -   \nabla f_i\left(\phi_{i, k}^t\right) \|^2 
\end{aligned}
\end{equation}
We also have
$\|\nabla F(\bar{\mathbf{X}}_{k+1}) - \nabla F(\bar{\mathbf{X}}_{k}) \|^2 =N L^2 \left\| \bar{x}_{k+1} - \bar{x}_{k} \right\|^2 
 = N  L^2 \gamma^2 \|\sum_t \overline{G}(\mathrm{\Phi}_k^t)\|^2
$, it further holds that:
\begin{equation} \label{eq:h_k_0}
\begin{aligned}
& \|{\mathbf{h}}_{k}\|^2 
 \leq \gamma^2 ( 1+ 2\rho^2 \|\Tilde{\mathbf{L}} \|^2)  ( 2\tau L^2 \|\widehat{\mathbf{\Phi}}_k \|^2
\\&+ 2\tau \sum_{t=0}^{\tau -1} \| G(\mathrm{\Phi}_k^t)  - \nabla F(\mathrm{\Phi}_k^t)   \|^2 ) 
\\& + 6 L^2 \frac{\gamma^4}{\beta^2} \bigg(  \tau \sum_i \sum_t\left\|   g_i\left(\phi_{i, k}^t\right) -   \nabla f_i\left(\phi_{i, k}^t\right)   \right\|^2 \\&+ N \|\sum_{t=0}^{\tau -1} \overline{\nabla \mathrm{F}}(\mathrm{\Phi}_k^t) \|^2 \bigg)
\end{aligned}
\end{equation}
Recalling \eqref{eq:delta-hat}, 
using Jensen's inequality $ 
\|\widehat{\mathbf{d}}_{k+1}\|^2  \leq  \frac{1}{\|\mathbf{\Delta}\|}\|  \mathbf{\Delta}\|^2  \|\widehat{\mathbf{d}}_{k}\|^2 +
\frac{1}{1-\|\mathbf{\Delta}\|} \|\mathbf{\widehat{h}}_{k}\|^2 
 $ yields \eqref{d_k_0}.
\end{proof}

\section{Convergence analysis for \algsgd} \label{sgd_proof}
\subsection{Key bounds}

\begin{lemma} \label{lem:phi_k}
Let Assumptions \ref{as:graph}, \ref{as:local-costs}, and \ref{as:variance} hold; when $\beta <  \frac{2}{\tau\lambda_u\rho}$ and $\gamma  \leq \bar{\gamma}_1$, we have
\begin{equation} \label{phi_sgd}
\begin{aligned}
\mathbb{E} \left[ \|\widehat{\mathbf{\Phi}}_k\|^2 \right] &\leq 
\left( \frac{72\beta \tau^2}{\lambda_l\rho}  + 144\tau^3\beta^2  \right) \mathbb{E} [  \|\widehat{\mathbf{d}}_{k}\|^2]  + 4N\tau^2\gamma^2 \sigma^2
\\& + 16\tau^3 N \gamma^2 \mathbb{E} [  \| \nabla F(\bar{x}_k) \|^2 ].
\end{aligned}
\end{equation}
\end{lemma}

\smallskip

\begin{proof}
From \eqref{eq:compact-admm} we can derive that 
\begin{equation} \label{bar_x}
\begin{aligned}
\bar{x}_{k+1} - x^* &= \bar{x}_{k}  - x^*   -\frac{\gamma}{N} \sum_{t=0}^{\tau -1} \sum_{i=1}^{N} g_i(\phi_{i,k}^{t}) 
\end{aligned}
\end{equation}
and
\begin{equation}  \label{Phi_k}
      \Phi_k^{t+1} =\Phi_k^{t} + \beta \mathbf{Y}_{k}  - \gamma( G(\Phi_k^t) -  \nabla \mathrm{F}(\bar{\mathbf{X}}_k)   )
\end{equation}
Recall that by Assumption \ref{as:variance},  $\| G(\Phi_k^t) - \nabla F(\Phi_k^t)  \|^2 \leq N\sigma^2$. 
Now, suppose that $\tau \geq 2$, using Jensen's inequality we obtain 
\begin{align}
&  \mathbb{E} [ \left\|\Phi_k^{t+1}-\bar{\mathbf{X}}_k\right\|^2 ] \nonumber 
\\& =  \mathbb{E} [ \|\Phi_k^{t}-\bar{\mathbf{X}}_k  + \beta \mathbf{Y}_{k} -\gamma( \nabla \mathrm{F}(\mathrm{\Phi}_k^t) -  \nabla \mathrm{F}(\bar{\mathbf{X}}_k) ) \|^2 ] + N \gamma^2\sigma^2 \nonumber 
\\& \leq\left(1+\frac{1}{\tau-1} \right)  \mathbb{E} [ \left\|\Phi_k^{t}-\bar{\mathbf{X}}_k\right\|^2 ] + N \gamma^2\sigma^2
\\&+ \tau   \mathbb{E} [\|  \beta Y_{k} - \gamma( \nabla \mathrm{F}(\mathrm{\Phi}_k^t) -  \nabla \mathrm{F}(\bar{\mathbf{X}}_k) ) \|^2 ] \nonumber 
\\& \leq \left( 1 + \frac{1}{\tau-1} + 2 \gamma^2 \tau  L^2   \right)  \mathbb{E} [ \left\|\mathbf{\Phi}_{k}^t-\bar{\mathbf{X}}_k\right\|^2 ] \\&+2 \tau \beta^2   \mathbb{E} [ \left\|  \mathbf{Y}_{k}  \right\|^2 ]
 + \gamma^2  N\sigma^2 \nonumber  
\\& \leq\left(1+\frac{5 / 4}{\tau-1}\right)  \mathbb{E} [\|\mathbf{\Phi}_{k}^t-\bar{\mathbf{X}}_k\|^2 ] 
+  \gamma^2  N\sigma^2 +2 \tau \beta^2  \mathbb{E} [\| \mathbf{Y}_{k}  \|^2] , \label{phi_k_t_vr}
\end{align}
where the last inequality holds when 
\begin{equation} \label{eq.gamma_saga_1}
2 \gamma^2 \tau L^2 \leq \frac{1/4}{\tau -1},   
\end{equation}
which is satisfied by $\gamma\leq \bar{\gamma}_1$.
Iterating the above inequality for $t=0,..., \tau-1$ 
\begin{align*}
\mathbb{E} & [  \left\|\Phi_k^{t+1}-\bar{\mathbf{X}}_k\right\|^2] \leq \left(1+\frac{5 / 4}{\tau-1}\right) ^t \mathbb{E} [\left\|\mathbf{X}_{k}-\bar{\mathbf{X}}_k\right\|^2] +
\\& +2 \tau \beta^2 \sum_{l=0}^t  \left(1+\frac{5 / 4}{\tau-1}\right) ^l \mathbb{E} [ \left\| \mathbf{Y}_{k} -\bar{\mathbf{Y}}_k + \bar{\mathbf{Y}}_k \right\|^2 ]  \\&
+  N\gamma^2 \sigma^2 \sum_{l=0}^t  \left(1+\frac{5 / 4}{\tau-1}\right) ^l
\\&\leq 4 \mathbb{E} [ \left\|\mathbf{X}_{k}-\bar{\mathbf{X}}_k\right\|^2 ] + 4\tau N \gamma^2 \sigma^2 
+ 8\tau^2\beta^2 \mathbb{E} [ \left\| \mathbf{Y}_{k} -\bar{\mathbf{Y}}_k + \bar{\mathbf{Y}}_k  \right\|^2],
\end{align*}
where the last inequality holds by $(1+ \frac{a}{\tau -1})^t \leq \exp(\frac{at}{\tau -1})\leq \exp(a)$ for $t\leq \tau-1$ and $a = {5}/{4}$.

Summing over $t$, it follows that
\begin{equation}\label{phi_sgd_0}
\begin{split}
 \mathbb{E} [ \|\widehat{\mathbf{\Phi}}_k \|^2 ] &\leq 
4\tau \mathbb{E} [\|\mathbf{X}_{k}-\bar{\mathbf{X}}_k\|^2 ]   + 4N\tau^2\gamma^2 \sigma^2 \\& + 16\tau^3\beta^2 \mathbb{E} [\| \mathbf{Y}_{k} -\bar{\mathbf{Y}}_k \|^2 ] + 16\tau^3 N \gamma^2 \mathbb{E} [  \| \nabla F(\bar{x}_k) \|^2 ];
\end{split}
\end{equation}
moreover, it is easy to verify that \eqref{phi_sgd_0} also holds for $\tau =1$.
Using   \eqref{X_Y_d} concludes the proof.
\end{proof}

\begin{lemma} \label{lemma:d_k}
Let Assumptions \ref{as:graph}, \ref{as:local-costs}, and \ref{as:variance} hold. When $\beta <  \frac{2}{\tau\lambda_u\rho}$ and $\gamma \leq \bar{\gamma}_1$, 
\begin{equation} \label{d_k}
\begin{aligned}
&\mathbb{E} [ \|\widehat{\mathbf{d}}_{k+1}\|^2 ] 
\\&\leq ( \delta + \frac{c_0}{1-\delta}) \mathbb{E} [ \|\widehat{\mathbf{d}}_{k}\|^2 ] 
+ \frac{c_1}{1-\delta}  \mathbb{E}[ \| \sum_t \overline{\nabla F}(\boldsymbol{\Phi}^t_k) \|^2 ]
\\& + \frac{c_2}{1-\delta}\mathbb{E}[\| \nabla F(\bar{x}_k) \|^2] + \frac{c_3}{1-\delta} \sigma^2
\end{aligned} 
\end{equation}
where 
\begin{align*}
\delta & \coloneqq 1- \frac{\lambda_l\rho \tau \beta}{2}, 
\\ \beta_0 & \coloneqq \frac{72\beta \tau^2}{\lambda_l\rho}  + 144\tau^3\beta^2 
\\c_0 & \coloneqq \gamma^2 ( 1+ 2\rho^2 \|\Tilde{\mathbf{L}} \|^2)  2\tau L^2 \|\widehat{\mathbf{V}}^{-1} \|^2 \beta_0, 
\\ 
c_1 & \coloneqq \gamma^4\frac{6 L^2 }{\beta^2} N\|\widehat{\mathbf{V}}^{-1} \|^2 ,
\\c_2&  \coloneqq \gamma^4 ( 1+ 2\rho^2 \|\Tilde{\mathbf{L}} \|^2)   L^2 32\tau^4 \|\widehat{\mathbf{V}}^{-1} \|^2 ,
\\ 
c_3 & \coloneqq \gamma^4 ( 1+ 2\rho^2 \|\Tilde{\mathbf{L}} \|^2) 8 L^2 N \tau^3  \|\widehat{\mathbf{V}}^{-1} \|^2  
\\&+  \gamma^2 ( 1+ 2\rho^2 \|\Tilde{\mathbf{L}} \|^2) 2\tau^2 N\|\widehat{\mathbf{V}}^{-1} \|^2 +  6 L^2 \frac{\gamma^4}{\beta^2} N \tau^2 \|\widehat{\mathbf{V}}^{-1} \|^2,
\end{align*}
\end{lemma}

\smallskip

\begin{proof}
When   $\beta <  \frac{2}{\tau\lambda_u\rho}$ and $\gamma \leq \bar{\gamma}_1$, using \eqref{eq:h_k_0}, \eqref{phi_sgd} and Assumption \ref{as:variance}, we have 
\begin{align*}
\|\widehat{\mathbf{h}}_{k}\|^2 
 &\leq  c_0  \|\widehat{\mathbf{d}}_{k}\|^2 + c_1  \|\sum_t \overline{ \nabla F}(\boldsymbol{\Phi}^t_k) \|^2 
 \\&+ c_2\mathbb{E}[\| \nabla F(\bar{x}_k) \|^2] +  c_3 \sigma^2,
\end{align*}
together with \eqref{d_k_0}   we can then derive that \eqref{d_k} holds.
\end{proof}

\subsection{Theorem \ref{th:nonconvex_SGD}} \label{noconvex_sgd_proof}
We start our proof by recalling that the following inequality holds for all $L$-smooth function $f$, $\forall y, z \in \R^n$ \cite{nesterov2013introductory}:
\begin{equation} \label{nonconvex_inequality}
f(y) \leq f(z) + \langle \nabla f(z), y-z \rangle + (L/2) \Vert y-z  \Vert^2 
\end{equation}
Based on \eqref{bar_x},
substituting $ y = \bar{x}_{k+1}$ and  $ z = \bar{x}_{k}$ into \eqref{nonconvex_inequality}, using Assumption \ref{as:variance}, we get
\begin{align*}
&  \mathbb{E}  [ F \left(\bar{x}_{k+1}\right)  ]
\\& \leq  \mathbb{E} [F\left(\bar{x}_{k}\right)]-\gamma  \mathbb{E} [ \langle\nabla F\left(\bar{x}_{k}\right), \frac{1}{N} \sum_t \sum_i\nabla f_i\left(\phi_{i, k}^t\right)\rangle ] \\&
+\frac{\gamma^2 L}{2}  \mathbb{E} [ \|\frac{1}{N}  \sum_t \sum_i g_i(\phi_{i, k}^t)\|^2 ]\\
& \leq  \mathbb{E} [ F(\bar{x}_{k}) ]-\gamma [ \langle\nabla F\left(\bar{x}_{k}\right), \frac{1}{N} \sum_t \sum_i \nabla f_i(\phi_{i, k}^t)  ) \rangle ] \\&
+  \gamma^2 \tau L  \mathbb{E} [\sum_t  \|\frac{1}{N} \sum_i \nabla f_i\left(\phi_{i, k}^t \right)\|^2 ]+ {\gamma^2 \tau^2 L \sigma^2}.
\end{align*}
Using now $2\langle a, b\rangle=\|a\|^2+\|b\|^2-\|a-b\|^2$, we have
\begin{align*}
& - \langle\nabla F\left(\bar{x}_{k}\right), \frac{1}{N} \sum_t \sum_i \nabla f_i\left(\phi_{i, k}^t\right) \rangle 
\\& =-\frac{\tau}{2}\left\|\nabla F\left(\bar{x}_{k}\right)\right\|^2
-\frac{1}{2} \sum_t\|\frac{1}{N} \sum_i \nabla f_i\left(\phi_{i, k}^t\right)\|^2
\\& +\frac{1}{2} \sum_t\|\frac{1}{N} \sum_i \nabla f_i\left(\phi_{i, k}^t\right)-\nabla F\left(\bar{x}_{k}\right)\|^2 
\\& \leq-\frac{\tau}{2}\left\|\nabla F\left(\bar{x}_{k}\right)\right\|^2-\frac{1}{2} \sum_t\|\frac{1}{N} \sum_i \nabla f_i\left(\phi_{i, k}^t\right)\|^2 + \frac{L^2}{2 N}   \|\widehat{\mathbf{\Phi}}_k \|^2.
\end{align*}
Now, combining the two equations above and using \eqref{X_Y_d}, yields
\begin{align*}
 & \mathbb{E} [ F\left(\bar{x}_{k+1}\right) ] 
 \leq   \mathbb{E} [ F\left(\bar{x}_{k}\right) ]-\frac{\gamma \tau}{2} \mathbb{E} [ \left\|\nabla F\left(\bar{x}_{k}\right)\right\|^2 ]
 \\& -\frac{\gamma}{2}(1 - 2 \gamma \tau L) \sum_t \mathbb{E} [ \|\frac{1}{N} \sum_i \nabla f_i\left(\phi_{i, k}^t\right) \|^2 ] 
 \\& +\frac{\gamma L^2}{2 N} \mathbb{E} [ \|\widehat{\mathbf{\Phi}}_k\|^2 ] + { \gamma^2 \tau^2 L \sigma^2}.
\end{align*}
Substituting \eqref{phi_sgd} into the above inequality yields
\begin{align*}
& \mathbb{E} [ F\left(\bar{x}_{k+1}\right) ] \leq   \mathbb{E} [ F\left(\bar{x}_{k}\right) ] + \\&-\frac{\gamma\tau}{2}\left( 1 - 16L^2 \tau^2 \gamma^2 \right) \mathbb{E} [ \|\nabla F\left(\bar{x}_{k}\right) \|^2 ]
\\& -\frac{\gamma}{2}(1 - 2 \gamma L \tau) \sum_t \mathbb{E} [ \|\frac{1}{N} \sum_i \nabla f_i\left(\phi_{i, k}^t\right)\|^2 ]
\\&+ \frac{\gamma L^2}{2 N}  \left( \frac{72\beta \tau^2}{\lambda_l\rho}  + 144\tau^3\beta^2  \right) \mathbb{E} [ \|\widehat{\mathbf{d}}_k \|^2 ]
\\& + { \gamma^2 \tau^2 L \sigma^2}
  +  2 \tau^2\gamma^3\sigma^2L^2.
\end{align*}
When $\gamma \leq \min \lbrace \bar{\gamma}_2, \bar{\gamma}_3 \rbrace$, then 
\begin{equation} \label{gamma_SGD_2}
16L^2 \tau^2 \gamma^2 \leq \frac{3}{4}, \quad 2\gamma L \tau \leq \frac{3}{4},
\end{equation}
and we can upper bound the previous inequality by
\begin{align*}
& \mathbb{E} [ F\left(\bar{x}_{k+1}\right) ] \leq   \mathbb{E} [ F\left(\bar{x}_{k}\right) ]-\frac{\gamma \tau}{8} \mathbb{E} [ \|\nabla F\left(\bar{x}_{k}\right) \|^2 ] 
\\&-\frac{\gamma}{8} \sum_t \mathbb{E} [ \|\frac{1}{N} \sum_i \nabla f_i\left(\phi_{i, k}^t\right)\|^2 ] 
\\&+ \frac{\gamma L^2}{2 N}  \left( \frac{72\beta \tau^2}{\lambda_l\rho}  + 144\tau^3\beta^2  \right) \mathbb{E} [ \|\widehat{\mathbf{d}}_k \|^2 ]
\\& + { \gamma^2 \tau^2 L \sigma^2} + 2 \tau^2\gamma^3\sigma^2L^2.
\end{align*}
Rearranging the above relation, we get
\begin{align*}
&\mathcal{D}_k  
\leq \frac{8}{\gamma \tau}  \mathbb{E} \left[ \left(  \tilde{F}\left(\bar{x}_{k}\right)-  \tilde{F}\left(\bar{x}_{k+1}\right)\right) \right]\\& 
+\frac{8}{\gamma \tau} \frac{\gamma L^2}{2 N}  \left( \frac{72\beta \tau^2}{\lambda_l\rho}  + 144\tau^3\beta^2  \right)  \mathbb{E} [\|\widehat{\mathbf{d}}_k \|^2 ] 
 \\& + {8 \gamma \tau L \sigma^2} + 16 \tau\gamma^2\sigma^2L^2,
\end{align*}
where $\mathcal{D}_k$ is defined in~\eqref{eq:convergence-metric}, and $\tilde{F}\left(\bar{x}_{k}\right) = F\left(\bar{x}_{k}\right)-F(x^*)$. 

Summing over $k=0,1, \ldots, K-1$,  using $-\tilde{F}\left(\bar{x}_{k}\right) \leq 0$, it holds that
\begin{align}
& \sum_{k=0}^{K-1} \mathcal{D}_k \leq  \frac{8 \tilde{F}\left(\bar{x}^0\right)}{\gamma \tau} +  c_4 \sum_{k=0}^{K-1} \mathbb{E} [ \|\widehat{\mathbf{d}}_k\|^2 ] + Kc_5 \sigma^2 \label{sum_E}
\end{align}
where
\begin{equation} 
\begin{aligned}
c_4 &\coloneqq  \frac{ 4 L^2}{ N}  \left( \frac{72\beta \tau}{\lambda_l\rho}  + 144\tau^2\beta^2  \right) 
\\ c_5 & \coloneqq {8 \gamma \tau L} + 16 \tau\gamma^2L^2
\end{aligned}
\end{equation}

We now bound the term $\sum_{k=0}^{K-1}  \|\widehat{\mathbf{d}}_k\|^2$.
From \eqref{d_k}, we have
\begin{align}
&\mathbb{E} [ \|\widehat{\mathbf{d}}_{k+1}\|^2 ] \nonumber \\&\leq ( \delta + \frac{c_0}{1-\delta}) \mathbb{E} [ \|\widehat{\mathbf{d}}_{k}\|^2 ] 
+ \frac{c_1 \tau }{1-\delta}  \mathbb{E}[ \|\frac{1}{N} \sum_i \nabla f_i\left(\phi_{i, k}^t\right)\|^2 ]\nonumber
\\& + \frac{c_2}{1-\delta}\mathbb{E}[\| \nabla F(\bar{x}_k) \|^2] + \frac{c_3}{1-\delta} \sigma^2\nonumber
\\& \leq  \bar{\delta}
\mathbb{E} [ \| \widehat{\mathbf{d}}_k \|^2 ] + \frac{c_3}{1-\delta} \sigma^2
+ R \mathcal{D}_k   \label{d_iter}
\end{align} 
where 
\begin{equation} \label{eq:K}
\begin{split}
R &\coloneqq \operatorname{\max}\{ \frac{c_2 }{1-\delta}, \frac{c_1\tau^2}{1-\delta}\}.
\end{split}
\end{equation}
Moreover, letting  $\gamma \leq  \bar{\gamma}_4$ and $ \frac{1}{\tau\lambda_u\rho} \leq \beta <  \frac{2}{\tau\lambda_u\rho}$,  we have
\begin{equation} \label{gamma_SGD_3}
\bar{\delta} =  \delta + \frac{ c_0 }{1-\delta} <1 - \frac{\lambda_l}{4\lambda_u}.
\end{equation}
Iterating  \eqref{d_iter} now gives
$$
\mathbb{E} [ \|\widehat{\mathbf{d}}_k\|^2 ] \leq \bar{\delta}^k \mathbb{E} [ \|\widehat{\mathbf{d}}_0\|^2]+ R \sum_{\ell=0}^{k-1}\bar{\delta}^{k-1-\ell} \mathcal{D}_{\ell} + \frac{c_3 \sigma^2}{1-\bar{\delta}}
$$
and summing this inequality over $k=0, \ldots, K-1$, it follows that 
\begin{equation} \label{sum_d}
\begin{aligned}
\sum_{k=0}^{K-1}  \mathbb{E} [\|\widehat{\mathbf{d}}_k\|^2 ]
\leq \frac{\|\widehat{\mathbf{d}}_0\|^2}{1-\bar{\delta}  }+\frac{R}{ 1- \bar{\delta} } \sum_{k=0}^{K-1} \mathcal{D}_k + \frac{c_3 \sigma^2 K}{1-\bar{\delta}}.
\end{aligned}
\end{equation}

Substituting \eqref{sum_d} into \eqref{sum_E} and rearranging, we obtain
$$
\begin{aligned}
&\left(1- q_0\right)  \sum_{k=0}^{K-1} \mathcal{D}_k \leq  \frac{8 \tilde{F}\left(\bar{x}^0\right)}{\gamma \tau} + q_1 \|\hat{\mathbf{d}}_0\|^2+ Kq_2 \sigma^2,
\end{aligned}
$$
where 
\begin{equation}
    \begin{aligned}
& q_0 \coloneqq \frac{ c_4 R}{1-\bar{\delta}}, \quad q_1 \coloneqq  \frac{c_4}{1-\bar{\delta}} \quad q_2 \coloneqq\frac{c_4 c_3 }{1-\bar{\delta}} + c_5.
    \end{aligned}
\end{equation}
Since
$1- \bar{\delta} \geq \frac{\lambda_l}{4\lambda_u}$ and $1- {\delta} \geq \frac{\lambda_l}{2\lambda_u}$,  when   $\gamma \leq \min \lbrace 1, \bar{\gamma}_5, \bar{\gamma}_6 \rbrace$, we have
\begin{equation} \label{gamma_SGD_4}
q_0 \leq \frac{1}{2},
\end{equation}
and it follows that
\begin{equation} \label{eq:converge}
\frac{1}{K} \sum_{k=0}^{K-1} \mathcal{D}_k \leq \frac{16 \tilde{F}\left(\bar{x}^0\right)}{\gamma \tau K} + \frac{2q_1}{K} \|\hat{\mathbf{d}}_0\|^2+ 2q_{2}\sigma^2.
\end{equation}
By collecting all step-size conditions,  if the step-size $\gamma < \bar\gamma_{\text{sgd}}\coloneqq \min _{i=1, 2,  \ldots, 6} \bar\gamma_i$,  then  \eqref{eq:converge}  holds, the states $\{\mathbf{X}_k\}$ generated by {\algsgd} converge to the neighborhood of the stationary point, concluding the proof.

\section{Convergence analysis for {\algsaga}}
\subsection{Key bounds}
We start by deriving an upper bound for the variance of the gradient estimator
$\mathbb{E}[\| g_i(\phi_{i, k}^t) -\nabla f_i({\phi}_{i,k}^t)\|^2]$. 
Define $t_i^k$ as the averaged consensus gap of the auxiliary variables of $\{r_{i, h, k}^{k}\}_{h=1}^{m_i}$ at node $i$:
\begin{align*}
&t_{i,k}^{t}=\frac{1}{m_i} \sum_{h=1}^{m_i}\|r_{i, h,k}^{t}-\bar{x}_k\|^2, 
\\&
t^{t}_k=\sum_{i=1}^N t_{i,k}^{t} = \frac{1}{m_i} \sum_{h=1}^{m_i}\|\mathbf{r}_{h,k}^{t}-\bar{\mathbf{X}}_k\|^2, 
\\&{t}_k=\sum_{t=0}^{\tau-1}t^{t}_k =\sum_{t=0}^{\tau -1 } \sum_{i=1}^N t_{i,k}^{t}.
\end{align*}

By the updates of $g_i(\phi_{i, k}^t)$ in {\algsaga},
\begin{align*}
& \mathbb{E}\left[\|g_i(\phi_{i, k}^t) - \nabla f_i\left( \phi_{i, k}^t \right)\|^2 \right] \\
& =\mathbb{E}[
\|  \frac{1}{|\mathcal{B}_i|} \sum_{h \in \mathcal{B}_i} \nabla f_{i, h}\left( \phi_{i, k}^t \right)- \frac{1}{|\mathcal{B}_i|} \sum_{h \in \mathcal{B}_i} \nabla f_{i, h}\left(r_{i, h, k}^{t}\right) \\&
 - (\nabla f_i\left(\phi_{i, k}^t\right)-\frac{1}{m_i} \sum_{h=1}^{m_j} \nabla f_{i, h}(r_{i, h,k}^{k})) \|^2 
] 
\\& \leq \mathbb{E}\left[\| \frac{1}{|\mathcal{B}_i|} \sum_{h \in \mathcal{B}_i} \nabla f_{i, h}\left( \phi_{i, k}^t \right)- \frac{1}{|\mathcal{B}_i|} \sum_{h \in \mathcal{B}_i} \nabla f_{i, h}\left(r_{i, h, k}^{t}\right)\|^2 \right] 
\\& \leq \mathbb{E}\left[ \frac{1}{|\mathcal{B}_i|}  \sum_{h \in \mathcal{B}_i} \left\|  \nabla f_{i, h}\left( \phi_{i, k}^t \right)-  \nabla f_{i, h}\left(r_{i, h, k}^{t}\right)\right\|^2 \right] 
\\&
= \frac{1}{|\mathcal{B}_i|}  \sum_{h \in \mathcal{B}_i}\mathbb{E}\left[  \left\|  \nabla f_{i, h}\left( \phi_{i, k}^t \right)-  \nabla f_{i, h}\left(r_{i, h, k}^{t}\right)\right\|^2 \right] 
\\
& \leq 2 L^2\left\|\phi_{i, k}^t -\overline{{x}}_k\right\|^2+2 L^2 \mathbb{E} [t_{i,k}^{t}], 
\end{align*}
where in the first inequality we use $ \mathbb{E} [ \|a- \mathbb{E}[a] \|^2 ] \leq \mathbb{E} [ \| a \|^2 ]$ with $ a = \nabla f_{i, h}( \phi_{i, k}^t )-\nabla f_{i, h}(r_{i, h, k}^{t}) $; and in the second inequality we use the smoothness of the costs.
As a consequence, we have
\begin{align}\label{G_VR}
  &\mathbb{E} [ \sum_i \sum_t \|   g_i\left(\phi_{i, k}^t\right) -   \nabla f_i(\phi_{i, k}^t) \|^2  ] \leq 2L^2 \|\widehat{\mathbf{\Phi}}_k \|^2 + 2 L^2 \mathbb{E}[{t}_k].
\end{align}

\begin{lemma} \label{lem:phi_k_vr}
Let Assumptions \ref{as:graph} and \ref{as:local-costs} hold; when $\beta \tau \leq \frac{2}{\lambda_u\rho}$ and $\gamma \leq \bar{\gamma}_7$, we have
\begin{equation}  \label{phi_vr}
\begin{aligned}
&\mathbb{E} [\|\widehat{\mathbf{\Phi}}_k\|^2 ]
\leq \left( \frac{72\beta \tau^2}{\lambda_l\rho}  + 144\tau^3\beta^2\right) \mathbb{E} [\| \widehat{\mathbf{d}}_k \|^2 ] \\& + 16 \tau^3\gamma^2 N  \mathbb{E} [\left\| \nabla F(\bar{x}_k)\right\|^2]  + 32 \tau^2 \gamma^2  L^2  \mathbb{E} [t_k].
\end{aligned}
\end{equation}
\end{lemma}
\begin{proof}
Suppose that $\tau \geq 2$, using \eqref{Phi_k} and \eqref{G_VR} we have
\begin{equation}
\begin{aligned}
&  \mathbb{E} [ \left\|\Phi_k^{t+1}-\bar{\mathbf{X}}_k\right\|^2 ] 
\\& =  \mathbb{E} [ \|\Phi_k^{t}-\bar{\mathbf{X}}_k  + \beta \mathbf{Y}_{k} -\gamma( G(\mathrm{\Phi}_k^t) -  \nabla \mathrm{F}(\bar{\mathbf{X}}_k) ) \|^2 ]  
\\& \leq\left(1+\frac{1}{\tau-1} \right)  \mathbb{E} [ \left\|\Phi_k^{t}-\bar{\mathbf{X}}_k\right\|^2 ] 
\\&+ \tau   \mathbb{E} [\|  \beta \mathbf{Y}_{k} - \gamma( G(\mathrm{\Phi}_k^t) -  \nabla \mathrm{F}(\bar{\mathbf{X}}_k) ) \|^2 ] 
\\& \leq \left( 1 + \frac{1}{\tau-1} + 4 \gamma^2 \tau( 2L^2+ L^2  ) \right)  \mathbb{E} [ \left\|\mathbf{\Phi}_{k}^t-\bar{\mathbf{X}}_k\right\|^2 ] \\&+2 \tau \beta^2   \mathbb{E} [ \left\|  \mathbf{Y}_{k}  \right\|^2 ]
 + 4\tau \gamma^2 
( 2  L^2  \mathbb{E} [ t^{t}_k ] )  
\\& \leq\left(1+\frac{5 / 4}{\tau-1}\right)  \mathbb{E} [\|\mathbf{\Phi}_{k}^t-\bar{\mathbf{X}}_k\|^2 ] 
\\&+ 8 \tau \gamma^2  L^2  \mathbb{E} [ t^{t}_k] +2 \tau \beta^2  \mathbb{E} [\| \mathbf{Y}_{k}  \|^2] , \label{phi_k_t_vr}
\end{aligned}
\end{equation}
where the last inequality holds when 
\begin{equation} \label{eq.gamma_saga_1}
4 \gamma^2 \tau(2L^2+ L^2) \leq \frac{1/4}{\tau -1}, 
\end{equation}
which can be satisfied when $\gamma \leq \bar{\gamma}_7$.
Similar to Lemma~\ref{lem:phi_k}, we can derive that~\eqref{phi_vr} holds,
which concludes the proof.
\end{proof}

The following lemma provides the bound on $t_k$.
\begin{lemma} \label{lemma:tk}
Let $\left\{t_k\right\}$ be the iterates generated by {\algsaga}. If  $\beta \tau \leq \frac{2}{\lambda_l\rho}$ and  $\gamma \leq \min \lbrace \bar{\gamma}_8, \bar{\gamma}_9,  \bar{\gamma}_{10}\rbrace$, we have for all $k \in \N$:
\begin{equation}\label{t_vr}
\begin{aligned}
& \mathbb{E}[t_{k}]\leq 2(s_0+s_1 )\mathbb{E} [\| \widehat{\mathbf{d}}_k\|^2] + 2s_2 \mathbb{E} [\| \nabla F(\bar{x}_k)\|^2 ],
\end{aligned}
\end{equation}
where
\begin{align}
&  s_0 = \frac{36\beta \tau^2 m_u }{\lambda_l\rho} +\frac{144 \tau^2 m_u}{m_l}\beta^2 \nonumber
\\& s_1 = \left( \frac{72\beta \tau^2}{\lambda_l\rho}  +144\tau^3\beta^2 \right) \frac{8m_u \tau}{m_l} \nonumber
\\& s_2 = \frac{16N \gamma^2 m_u \tau^2}{m_l} + \frac{8m_u \tau}{m_l}16 \tau^3\gamma^2 N. \label{eq.s}
\end{align}
\end{lemma}

\smallskip

\begin{proof}
From Algorithm~\ref{alg:lt-saga-admm}, 
$\forall k,  r_{i, h, k}^{ t+1} = r_{i, h,k}^{k}$ with probability $1-\frac{1}{m_i}$ and $r_{i, h,k}^{t+1} =  \phi_{i,k}^{t+1}$ with probability $\frac{1}{m_i}$, therefore, 
\begin{align*}
& \mathbb{E}[t^{t+1}_k] 
 = \frac{1}{m_i} \sum_{h=1}^{m_i} \mathbb{E}[ \|\mathbf{r}_{h,k}^{t+1}-\bar{\mathbf{X}}_{k}\|^2]
\\& =\frac{1}{m_i} \sum_{h=1}^{m_i} \mathbb{E}[ (1-\frac{1}{m_i})\|\mathbf{r}_{h,k}^{t}-\bar{\mathbf{X}}_{k} \|^2  +\frac{1}{m_i}\|\Phi_{k}^{t+1}-\bar{\mathbf{X}}_{k} \|^2]
\\
& = \left(1-\frac{1}{m_i}\right) \frac{1}{m_i} \sum_{h=1}^{m_i}\mathbb{E}\left[ \|\mathbf{r}_{h,k}^{t}-\bar{\mathbf{X}}_{k} \|^2 \right] \\ &+\frac{1}{m_i} \mathbb{E} [\|\Phi_{k}^{t+1}-\bar{\mathbf{X}}_{k} \|^2].
\end{align*}
Denote
$q^t_k = \beta \mathbf{Y}_{k} - \gamma( G(\mathrm{\Phi}_k^t) -  \nabla \mathrm{F}(\bar{\mathbf{X}}_k) ),$
we have
$\left\|\mathrm{\Phi}_{k}^{t+1}-\bar{\mathbf{X}}_{k}\right\|^2 = \left\| \mathrm{\Phi}_{k}^{t+1}-  \mathrm{\Phi}_{k}^{t} + \mathrm{\Phi}_{k}^{t} -\bar{\mathbf{X}}_{k}\right\|^2 
\leq 2 \| \Phi_{k}^{t} -\bar{\mathbf{X}}_{k} \|^2  + 2\| q^t_k\|^2,
$ and
\begin{align*}
&\mathbb{E} [\| q^t_k \|^2] 
\\& \leq 2 \gamma^2  \mathbb{E} [ \|    G(\mathrm{\Phi}_k^t) -  \nabla \mathrm{F}(\bar{\mathbf{X}}_k) \|^2 ] + 2\beta^2   \mathbb{E} [ \left\| \mathbf{Y}_{k}  \right\|^2 ] \nonumber  
\\& \leq4 \gamma^2( 2L^2+ L^2  )  \mathbb{E} [ \left\|\mathrm{\Phi}_{k}^t-\bar{\mathbf{X}}_k\right\|^2 ] +2 \beta^2   \mathbb{E} [ \left\|  \mathbf{Y}_{k}  \right\|^2 ]
\\& + 4\gamma^2 
( 2  L^2  \mathbb{E} [ t^{t}_k ] )
\\& \leq 12 \gamma^2 L^2 \left\|\mathrm{\Phi}_{k}^t-\bar{\mathbf{X}}_k\right\|^2 + 4\gamma^2 N \| \nabla {F}(\bar{{x}}_k)\|^2 
\\&+ 4\beta^2 \|  \mathbf{Y}_{k}  -  \mathbf{\bar{Y}}_{k}  \|^2 + 8 \gamma^2 L^2  \mathbb{E} [ t^{t}_k ],
\end{align*}
it follows that
\begin{align}
& \mathbb{E}[t^{t+1}_k] 
= (1-\frac{1}{m_i}) \frac{1}{m_i} \sum_{h=1}^{m_i}\|\mathbf{r}_{h,k}^{t}-\bar{\mathbf{X}}_{k}\|^2 
\\&+\frac{1}{m_i}\|\mathrm{\Phi}_{k}^{t+1}-\bar{\mathbf{X}}_{k}\|^2 \nonumber
\\& \leq (1-\frac{1}{m_i})t^{t}_k +\frac{1}{m_i} 
\left(2 \| \mathrm{\Phi}_{k}^{t} -\bar{\mathbf{X}}_{k} \|^2  + 2\| q^t_k\|^2\right) \nonumber
\\& \leq \left(1-\frac{1}{m_u} + \frac{16 \gamma^2  L^2}{m_l}  \right)\mathbb{E}[t^{t}_k] 
\\&+ \left(  \frac{2}{m_l} +  \frac{24\gamma^2L^2}{m_l}  \right)\mathbb{E} [  \|\mathrm{\Phi}_{k}^t-\bar{\mathbf{X}}_k\|^2  \nonumber
\\& + \frac{72}{m_l}\beta^2\| \widehat{\mathbf{d}}_k \|^2 + \frac{8N}{m_l}\gamma^2\| \nabla F(\bar{\mathbf{X}}_k) \|^2 \nonumber
\\& \leq \left(1-\frac{1}{2m_u}  \right)\mathbb{E}[t^{t}_k] + \frac{4}{m_l} \mathbb{E} [  \|\mathrm{\Phi}_{k}^t-\bar{\mathbf{X}}_k\|^2 
\\&+ \frac{72}{m_l}\beta^2\| \widehat{\mathbf{d}}_k \|^2 + \frac{8N}{m_l}\gamma^2\| \nabla F(\bar{\mathbf{X}}_k) \|^2 \label{t_k_update}
\end{align}
where the last inequality holds when 
\begin{equation} \label{eq.gamma_saga_2_0}
\frac{16 \gamma^2  L^2}{m_l} < \frac{1}{2m_u},
    \quad
   {24\gamma^2L^2}< 2. 
\end{equation}
Iterating \eqref{t_k_update} for $t=0,...,\tau-1$ then yields:
\begin{align*}
& \mathbb{E} [  t^{t}_k ]
\leq(1-\frac{1}{2m_u})^t \mathbb{E} [ \|\mathbf{X}_{k}-\bar{\mathbf{X}}_k \|^2]
\\& +\frac{72}{m_l}\beta^2 \sum_{l=0}^{t-1}  (1-\frac{1}{2m_u})^{t-1-l} \mathbb{E}[ \| \widehat{\mathbf{d}}_k \|^2] 
\\&+ \frac{8N \gamma^2}{m_l} \sum_{l=0}^{t-1}  (1-\frac{1}{2m_u})^l  \| \nabla F(\bar{x}_k) \|^2
 \\&+  \frac{4}{m_l}\sum_{l=0}^{t-1}  (1-\frac{1}{2m_u})^{t-1-l} \mathbb{E} [  \|\Phi_{k}^{l}-\bar{\mathbf{X}}_k\|^2
 \\&\leq \frac{36\beta \tau m_u }{\lambda_l\rho} \mathbb{E} [\| \widehat{\mathbf{d}}_k\|^2]
+ \frac{16N \gamma^2 m_u \tau}{m_l}  \| \nabla F(\bar{x}_k) \|^2
 \\&+  \frac{8m_u \tau}{m_l}\sum_{l=0}^{t-1}  \mathbb{E} [  \| \mathrm{\Phi}_{k}^{l}-\bar{\mathbf{X}}_k\|^2
 +\frac{144 m_u \tau}{m_l}\beta^2 \mathbb{E}[ \| \widehat{\mathbf{d}}_k \|^2].
\end{align*}
Summing the above relation over $t = 0, 1,..., \tau -1$ we get: 
\begin{align*}
&\mathbb{E} [  t_{k} ] \\&  \leq 
 \left( \frac{36\beta \tau^2 m_u }{\lambda_l\rho} +\frac{144 \tau^2 m_u}{m_l}\beta^2\right)
\mathbb{E} [\| \widehat{\mathbf{d}}_k\|^2] + \frac{8m_u \tau}{m_l} \|\widehat{\mathbf{\Phi}}_k \|^2 
\\&+ \frac{16N \gamma^2 m_u \tau^2}{m_l} \| \nabla F(\bar{x}_k) \|^2,
\end{align*}
and using \eqref{phi_vr} then yields
\begin{align*}
\mathbb{E} [  t_{k} ] &\leq (s_0+s_1 )\mathbb{E} [\| \widehat{\mathbf{d}}_k\|^2] + s_2 \mathbb{E} [\| \nabla F(\bar{x}_k)\|^2 ] \\&+  \frac{8m_u \tau}{m_l} 32 \tau^2 \gamma^2  L^2 \mathbb{E} [t_k],
\end{align*}
where $s_0$, $s_1$ and $s_2$ are defined in \eqref{eq.s}.
Letting 
\begin{equation}  \label{eq.gamma_saga_2_1}
 \frac{8m_u \tau}{m_l} 32 \tau^2 \gamma^2  L^2  < \frac{1}{2}, 
\end{equation}
and thus \eqref{t_vr} holds. The conditions \eqref{eq.gamma_saga_2_0} and \eqref{eq.gamma_saga_2_1} hold when $\gamma \leq \min \lbrace \bar{\gamma}_8, \bar{\gamma}_9,  \bar{\gamma}_{10}\rbrace$.
\end{proof}

\smallskip

\begin{lemma} \label{lem:vr_d_k}
Let Assumptions \ref{as:graph} and \ref{as:local-costs} hold; when  $\beta \tau \leq \frac{2}{\lambda_u\rho}$ and   $\gamma < \min \lbrace \bar{\gamma}_1,  \bar{\gamma}_{7} \rbrace$,
it holds that $\forall k \geq 0$
\begin{equation} \label{d_k_vr}
\begin{aligned}
& \mathbb{E} [ \|\widehat{\mathbf{d}}_{k+1}\|^2 ]
\leq 
( \delta + \frac{ \tilde{q}_0}{1-\delta})
 \mathbb{E} [ \|\widehat{\mathbf{d}}_{k}\|^2 ]
\\&+
\frac{\tilde{q}_1}{1-\delta}  \mathbb{E} [ \| \sum_t \overline{\nabla F}(\boldsymbol{\Phi}^t_k) \|^2]
+ \frac{\tilde{q}_2}{1-\delta}  \mathbb{E} [\| \nabla F(\bar{x}_k) \|^2 ],
\end{aligned} 
\end{equation}
where 
\begin{align*} 
 \tilde{\beta}_0 &\coloneqq \frac{72\beta \tau^2}{\lambda_l\rho}  + 144\tau^3\beta^2,
\\\tilde{c}_0&  \coloneqq  \gamma^2 ( 1+ 2\rho^2 \|\Tilde{\mathbf{L}} \|^2) 6\tau L^2 \|\widehat{\mathbf{V}}^{-1} \|^2 + 6 L^2 \frac{\gamma^4}{\beta^2}  2\tau L^2\|\widehat{\mathbf{V}}^{-1} \|^2 , 
\\\tilde{c}_1 &  \coloneqq\gamma^2 ( 1+ 2\rho^2 \|\Tilde{\mathbf{L}} \|^2) 4\tau L^2 \|\widehat{\mathbf{V}}^{-1} \|^2 + 6 L^2 \frac{\gamma^4}{\beta^2}  2\tau L^2\|\widehat{\mathbf{V}}^{-1} \|^2 , 
\\\tilde{c}_2 & \coloneqq 6 L^2 \frac{\gamma^4}{\beta^2}   N\|\widehat{\mathbf{V}}^{-1} \|^2, \\
 \tilde{q}_0 &\coloneqq \tilde{c}_0  \tilde{\beta}_0 + 2(s_0 +s_1) (\tilde{c}_1+ 32 \tau^2 \gamma^2 L^2 \tilde{c}_0),
\\ \tilde{q}_1 & \coloneqq   6 L^2 \frac{\gamma^4}{\beta^2} N, 
\\
 \tilde{q}_2& \coloneqq  16\tau^3\gamma^2N \tilde{c}_0 + 2s_2(\tilde{c}_1+ 32 \tau^2 \gamma^2 L^2 \tilde{c}_0).
\end{align*}
\end{lemma}
\begin{proof}
When  $\beta \tau \leq \frac{2}{\lambda_u\rho}$ and  $\gamma < \min \lbrace \bar{\gamma}_1,  \bar{\gamma}_{7} \rbrace$,
substituting \eqref{G_VR} and \eqref{phi_vr} into \eqref{eq:h_k_0} then yields 
\begin{equation*} 
\begin{aligned}
& \|{\mathbf{h}}_{k}\|^2 
 \leq \gamma^2 ( 1+ 2\rho^2 \|\Tilde{\mathbf{L}} \|^2)  ( 6\tau L^2 \|\widehat{\mathbf{\Phi}}_k \|^2 + 4\tau L^2 \mathbb{E}[{t}_k] ) 
\\& + 6 L^2 \frac{\gamma^4}{\beta^2} \bigg( 4\tau L^2 \|\widehat{\mathbf{\Phi}}_k \|^2 + 4\tau L^2 \mathbb{E}[{t}_k]+ N \|\sum_{t=0}^{\tau -1} \overline{\nabla \mathrm{F}}(\mathrm{\Phi}_k^t) \|^2 \bigg)
\end{aligned}
\end{equation*}
and
\begin{align*}
& \|\widehat{\mathbf{h}}_{k}\|^2 
\leq \tilde{c}_0 \|\widehat{\mathbf{\Phi}}_k\|^2 + \tilde{c}_1 t_k + \tilde{c}_2  \|\sum_t \overline{ \nabla F}(\boldsymbol{\Phi}^t_k) \|^2
\\& \leq \tilde{q}_0 \mathbb{E} [ \|\widehat{\mathbf{d}}_{k}\|^2 ] +    \tilde{q}_1  \mathbb{E} [ \| \sum_t \overline{\nabla F}(\boldsymbol{\Phi}^t_k) \|^2] +  \tilde{q}_2   \mathbb{E} [\| \nabla F(\bar{x}_k) \|^2 ],
\end{align*}
 together with \eqref{d_k_0}, it proves that \eqref{d_k_vr} holds.
\end{proof}

\subsection{Theorem \ref{th:nonconvex_SGD_VR}} \label{noconvex_sgd_vr_proof}
Based on \eqref{bar_x},
substituting $ y = \bar{x}_{k+1}$ and  $ z = \bar{x}_{k}$ into \eqref{nonconvex_inequality} and using \eqref{G_VR}, we get
\begin{align*}
&  \mathbb{E}  [ F \left(\bar{x}_{k+1}\right)  ]
\\& \leq  \mathbb{E} [F\left(\bar{x}_{k}\right)]-\gamma  \mathbb{E} [ \langle\nabla F\left(\bar{x}_{k}\right), \frac{1}{N} \sum_t \sum_i\nabla f_i\left(\phi_{i, k}^t\right)\rangle ] \\&
+\frac{\gamma^2 L}{2}  \mathbb{E} [ \|\frac{1}{N}  \sum_t \sum_i g_i(\phi_{i, k}^t)\|^2 ]\\
& \leq  \mathbb{E} [ F(\bar{x}_{k}) ]-\gamma [ \langle\nabla F\left(\bar{x}_{k}\right), \frac{1}{N} \sum_t \sum_i \nabla f_i(\phi_{i, k}^t)  ) \rangle ] \\&
+  \gamma^2 \tau L  \mathbb{E} [\sum_t  \|\frac{1}{N} \sum_i \nabla f_i\left(\phi_{i, k}^t \right)\|^2 ]+ 2\gamma^2 \tau L^3 ( \|\widehat{\mathbf{\Phi}}_k \|^2 +  \mathbb{E}[{t}_k]).
\end{align*}
Using now $2\langle a, b\rangle=\|a\|^2+\|b\|^2-\|a-b\|^2$, we have
\begin{align*}
& - \langle\nabla F\left(\bar{x}_{k}\right), \frac{1}{N} \sum_t \sum_i \nabla f_i\left(\phi_{i, k}^t\right) \rangle 
\\& =-\frac{\tau}{2}\left\|\nabla F\left(\bar{x}_{k}\right)\right\|^2
-\frac{1}{2} \sum_t\|\frac{1}{N} \sum_i \nabla f_i\left(\phi_{i, k}^t\right)\|^2
\\& +\frac{1}{2} \sum_t\|\frac{1}{N} \sum_i \nabla f_i\left(\phi_{i, k}^t\right)-\nabla F\left(\bar{x}_{k}\right)\|^2 
\\& \leq-\frac{\tau}{2}\left\|\nabla F\left(\bar{x}_{k}\right)\right\|^2-\frac{1}{2} \sum_t\|\frac{1}{N} \sum_i \nabla f_i\left(\phi_{i, k}^t\right)\|^2 + \frac{L^2}{2 N}   \|\widehat{\mathbf{\Phi}}_k \|^2.
\end{align*}
Now, combining the two equations above and using \eqref{X_Y_d}, yields
\begin{align*}
 & \mathbb{E} [ F\left(\bar{x}_{k+1}\right) ] 
 \leq   \mathbb{E} [ F\left(\bar{x}_{k}\right) ]-\frac{\gamma \tau}{2} \mathbb{E} [ \left\|\nabla F\left(\bar{x}_{k}\right)\right\|^2 ]
 \\& -\frac{\gamma}{2}(1 - 2 \gamma \tau L) \sum_t \mathbb{E} [ \|\frac{1}{N} \sum_i \nabla f_i\left(\phi_{i, k}^t\right) \|^2 ] 
 \\& +\frac{\gamma L^2}{2 N} \mathbb{E} [ \|\widehat{\mathbf{\Phi}}_k\|^2 ] +2\gamma^2 \tau L^3 ( \|\widehat{\mathbf{\Phi}}_k \|^2 +  \mathbb{E}[{t}_k]).
\end{align*}
Using \eqref{phi_vr} and \eqref{t_vr} we have
\begin{align*}
& \mathbb{E}[ F\left(\bar{x}_{k+1}\right) ]\leq  \mathbb{E}[ F\left(\bar{x}_{k}\right) ]-\frac{\gamma \tau}{2}  \mathbb{E}[\left\|\nabla F\left(\bar{x}_{k}\right)\right\|^2 ]\\& -\frac{\gamma}{2}(1-2 \gamma \tau L) \mathbb{E}[\sum_t  \|\frac{1}{N} \sum_i \nabla f_i\left(\phi_{i, k}^t\right) \|^2] 
\\&+ \tilde{q}_3 \mathbb{E}[\| \nabla F(x) \|^2] + \tilde{q}_4 \mathbb{E}[\| \widehat{\mathbf{d}}_k \|^2], 
\end{align*}
where 
\begin{align*}
\tilde{q}_3 &\coloneqq 16\tau^3\gamma^2N (\frac{\gamma L^2}{2 N}+ 2\gamma^2 \tau L^3) \\&+ 2s_2(2\gamma^2 \tau L^3+ 32 \tau^2 \gamma^2 L^2 (\frac{\gamma L^2}{2 N}+ 2\gamma^2 \tau L^3)), 
\\ \tilde{q}_4 & \coloneqq (\frac{\gamma L^2}{2 N}+ 2\gamma^2 \tau L^3)  \tilde{\beta}_0 \\&+ 2(s_0 +s_1) (2\gamma^2 \tau L^3+ 32 \tau^2 \gamma^2 L^2 (\frac{\gamma L^2}{2 N}+ 2\gamma^2 \tau L^3)).
\end{align*}
Letting $\gamma \leq \min \lbrace 1, \bar{\gamma}_{11}, \bar{\gamma}_{12}\rbrace$, then
\begin{equation} \label{eq.gamma_saga_4}
\begin{aligned}
&  \tilde{q}_3 \leq \frac{3\gamma \tau}{8}, \quad 2\gamma \tau L \leq  \frac{3}{4},
\end{aligned}
\end{equation}
and we can upper bound the previous inequality by
\begin{align*}
&  \mathbb{E}[F\left(\bar{x}_{k+1}\right) ] \leq   \mathbb{E}[ F\left(\bar{x}_{k}\right) ]-\frac{\gamma \tau}{8}  \mathbb{E}[\|\nabla F\left(\bar{x}_{k}\right) \|^2 ]\\& -\frac{\gamma}{8} \mathbb{E}[\sum_t\|\frac{1}{N} \sum_i \nabla f_i\left(\phi_{i, k}^t\right)\|^2 ] +\tilde{q}_4 \mathbb{E}[\|  \widehat{\mathbf{d}}_k  \|^2]. 
\end{align*}
Rearranging we get
\begin{equation} \label{final_E}
\begin{aligned}
\mathcal{D}_k  
&\leq \frac{8}{\gamma \tau}( \mathbb{E}[ \tilde{F}\left(\bar{x}_{k}\right)]- \mathbb{E}[ \tilde{F}\left(\bar{x}_{k+1}\right)])+ \frac{8}{\gamma \tau} 
\tilde{q}_4 \mathbb{E}[\|  \widehat{\mathbf{d}}_k  \|^2],
\end{aligned}
\end{equation}
where 
$\mathcal{D}_k $ is defined in~\eqref{eq:convergence-metric}, and $\tilde{F}\left(\bar{x}_{k}\right) = F\left(\bar{x}_{k}\right)-F(x^*)$. 
Summing  \eqref{final_E} over $k=0,1, \ldots, K-1$ and using $-\tilde{F}\left(\bar{x}_{k}\right) \leq 0$, it holds that
\begin{align} \label{E_1}
\sum_{r=0}^{K-1} \mathcal{D}_k \leq \frac{8 \tilde{F}\left(\bar{x}_0\right)}{\gamma \tau} 
+ \frac{8\tilde{q}_4}{\gamma \tau}   \sum_{k=0}^{K-1}  \|\mathbb{E}[\hat{\mathbf{d}}_k\|^2]. 
\end{align}

According to \eqref{d_k_vr},
 we derive that $\forall k\geq 0$,
\begin{equation} \label{iter_G}
\begin{aligned}
&\mathbb{E} [ \|\widehat{\mathbf{d}}_{k+1}\|^2 ]
\leq 
( \delta + \frac{ \tilde{q}_0}{1-\delta})
 \mathbb{E} [ \|\widehat{\mathbf{d}}_{k}\|^2 ]
\\&+
\frac{\tilde{q}_1}{1-\delta}  \mathbb{E} [ \| \sum_t \overline{\nabla F}(\boldsymbol{\Phi}^t_k) \|^2]
+ \frac{\tilde{q}_2}{1-\delta}  \mathbb{E} [\| \nabla F(\bar{x}_k) \|^2 ]
\\& \leq  \tilde{\delta} \mathbb{E} [ \|\widehat{\mathbf{d}}_{k}\|^2 ] + \tilde{R} \mathcal{D}_k,
\end{aligned} 
\end{equation}
where 
$$ \tilde{R} = \max \left\{\frac{\tilde{q}_1 \tau^2}{1-\delta} ,  \frac{\tilde{q}_2}{1-\delta} \right\}.$$
Letting  $\gamma \leq \min \{ \bar{\gamma}_1, \bar{\gamma}_{13} \}$ and $ \frac{1}{\tau\lambda_u\rho} \leq \beta <  \frac{2}{\tau\lambda_u\rho}$, then
\begin{equation} \label{eq.gamma_saga_5}
\tilde{\delta} =   \delta + \frac{ \tilde{q}_0}{1-\delta} \leq 1 - \frac{\lambda_l}{4\lambda_u}.
\end{equation}
Iterating \eqref{iter_G} yields $\forall k\geq1$,
$
\mathbb{E}[\|\widehat{\mathbf{d}}_{k}\|^2] \leq \tilde{\delta}^k  \mathbb{E}[\|\widehat{\mathbf{d}}_{0}\|^2]  +  \tilde{R} \sum_{\ell=0}^{k-1}\tilde{\delta}^{k-1-\ell}  \mathcal{D}_{\ell},
$
and summing over $k=0, \ldots, K-1$ it holds that 
\begin{equation} \label{d_1}
\begin{aligned}
& \sum_{k=0}^{K-1} \mathbb{E}[\|\widehat{\mathbf{d}}_{k}\|^2] \leq \frac{1}{1-\tilde{\delta}}  \|\widehat{\mathbf{d}}_{0}\|^2 
+\sum_{k=0}^{K-1}\frac{\tilde{R}}{1- \tilde{\delta}}\mathcal{D}_{k}.
\end{aligned}
\end{equation}

Substituting \eqref{d_1} into \eqref{E_1}, and rearranging, we obtain

\begin{align} 
& \sum_{r=0}^{K-1} \mathcal{D}_k \leq \frac{8 \tilde{F}\left(\bar{x}_0\right)}{\gamma \tau} 
+ \frac{8\tilde{q}_4}{\gamma \tau}   \sum_{k=0}^{K-1}  \|\mathbb{E}[\hat{\mathbf{d}}_k\|^2]. 
\end{align}

\begin{align*}
&(1-  \frac{\tilde{R}}{1- \tilde{\delta}} \frac{8\tilde{q}_4}{\gamma \tau}) \sum_{k=0}^{K-1} \mathcal{D}_k 
 \leq  \frac{8 \tilde{F}\left(\bar{x}_0\right)}{\gamma \tau } 
 + \frac{8\tilde{q}_4 }{\gamma \tau (1- \tilde{\delta})}  \|\widehat{\mathbf{d}}_{0}\|^2.
\end{align*}
Since 
$
1- \tilde{\delta}  \geq \frac{\lambda_l}{4\lambda_u},
$
let  $\gamma \leq \min \lbrace \bar{\gamma}_{14}, \bar{\gamma}_{15}\rbrace$, then
\begin{equation} \label{eq.gamma_saga_6}
\frac{\tilde{R}}{1- \tilde{\delta}} \frac{8 \tilde{q}_4}{\gamma \tau} 
\leq \frac{1}{2}, 
\end{equation}
and therefore it follows that
\begin{equation} \label{eq:convergence_vr}
 \frac{1}{K} \sum_{k=0}^{K-1} \mathcal{D}_k \leq \frac{16 \tilde{F}\left(\bar{x}_0\right)}{K \gamma \tau} 
+  \frac{16\tilde{q}_4}{K \gamma \tau (1- \tilde{\delta} )} \|\widehat{\mathbf{d}}_{0}\|^2.
\end{equation}

By collecting all step-size conditions,  if the step-size $\gamma$ satisfies  $\gamma \leq \min \bar{\gamma}_{i=1, 7, 8, \ldots, 15}$,
then the states $\{\mathbf{X}_k\}$ generated by {\algsaga} converge to the stationary point, concluding the proof.


\bibliographystyle{IEEEtran} 
\bibliography{references}

\end{document}